\def\eg{\emph{e.g.}}
\def\ie{\emph{i.e.}}
\def\Hcal{{\mathcal H}}
\def\Fcal{{\mathcal F}}
\def\Xcal{{\mathcal X}}
\def\Ncal{{\mathcal N}}
\def\R{{\mathbb R}}
\def\Z{{\mathbb Z}}
\def\Sbb{{\mathbb S}}
\DeclareMathOperator{\Tr}{Tr}
\DeclareMathOperator{\Range}{Range}
\DeclareMathOperator{\diag}{diag}
\DeclareMathOperator{\Var}{Var}
\DeclareMathOperator{\E}{\mathbb{E}}
\DeclareMathOperator{\1}{\mathbbm{1}}
\newtheorem{theorem}{Theorem}
\newtheorem{proposition}[theorem]{Proposition}
\title{Approximation and Learning with Deep \\ Convolutional Models: a Kernel Perspective}
\author{Alberto Bietti \\
Center for Data Science, New York University\\
\texttt{alberto.bietti@nyu.edu}
}
\begin{document}

\maketitle

\begin{abstract}

The empirical success of deep convolutional networks on tasks involving high-dimensional data such as images or audio suggests that they can efficiently approximate certain functions that are well-suited for such tasks.
In this paper, we study this through the lens of kernel methods,
by considering simple hierarchical kernels with two or three convolution and pooling layers, inspired by convolutional kernel networks.
These achieve good empirical performance on standard vision datasets, while providing a precise description of their functional space that yields new insights on their inductive bias.
We show that the RKHS consists of additive models of interaction terms between patches, and that its norm encourages spatial similarities between these terms through pooling layers.
We then provide generalization bounds which illustrate how pooling and patches yield improved sample complexity guarantees when the target function presents such~regularities.

\end{abstract}

\section{Introduction}
\label{sec:introduction}

Deep convolutional models have been at the heart of the recent successes of deep learning in problems where the data consists of high-dimensional signals, such as image classification or speech recognition.
Convolution and pooling operations have notably contributed to the practical success of these models, yet our theoretical understanding of how they enable efficient learning is still limited.

One key difficulty for understanding such models is the curse of dimensionality: due to the high-dimensionality of the input data, it is hopeless to learn arbitrary functions from samples.
For instance, classical non-parametric regression techniques for learning generic target functions typically require either low dimension or very high degrees of smoothness in order to obtain good generalization~\citep[\eg,][]{wainwright2019high}, which makes them impractical for dealing with high-dimensional signals.
Thus, further assumptions on the target function are needed to make the problem tractable, and we seek assumptions that make convolutions a useful modeling tool.
Various works have studied approximation benefits of depth with models that resemble deep convolutional architectures~\citep{cohen2017inductive,mhaskar2016deep,schmidt2020nonparametric}.
Nevertheless, while such function classes may provide improved statistical efficiency in theory, it is unclear if there exist efficient algorithms to learn such models, and hence, whether they might correspond to what convolutional networks learn in practice.
To overcome this issue, we consider instead function classes based on kernel methods~\citep{scholkopf2001learning,wahba1990spline}, which are known to be learnable with efficient (polynomial-time) algorithms, such as kernel ridge regression or gradient descent.

We consider ``deep'' structured kernels known as convolutional kernels, which yield good empirical performance on standard computer vision benchmarks~\citep{li2019enhanced,mairal2016end,mairal2014convolutional,shankar2020neural}, and are related to over-parameterized convolutional networks (CNNs) in so-called ``kernel regimes''~\citep{arora2019exact,bietti2019inductive,daniely2016toward,garriga2019deep,jacot2018neural,novak2019bayesian,yang2019scaling}.
Such regimes may be seen as providing a first-order description of what common deep models trained with gradient methods may learn.
Studying the corresponding function spaces (reproducing kernel Hilbert spaces, or RKHS) may then provide insight into the benefits of various architectural choices.
For fully-connected architectures, such kernels are rotation-invariant, and the corresponding RKHSs are well understood in terms of regularity properties on the sphere~\citep{bach2017breaking,smola2001regularization},
but do not show any major differences between deep and shallow kernels~\citep{bietti2021deep,chen2021deep,geifman2020similarity}.
In contrast, in this work we show that even in the kernel setting, multiple layers of convolution and pooling operations can be crucial for efficient learning of functions with specific structures that are well-suited for natural signals.
Our work paves the way for further studies of the inductive bias of optimization algorithms on deep convolutional networks beyond kernel regimes, for instance by incorporating adaptivity to low-dimensional structure~\citep{bach2017breaking,chizat2020implicit,wei2019regularization} or hierarchical learning~\citep{allen2020backward}.

We make the following contributions:
\begin{itemize}[leftmargin=0.3cm,itemsep=0pt,topsep=-0.1cm]
	\item We revisit convolutional kernel networks~\citep{mairal2016end}, finding that simple two or three layers models with Gaussian pooling and polynomial kernels of degree 2-4 at higher layers provide competitive performance with state-of-the-art convolutional kernels such as Myrtle kernels~\citep{shankar2020neural} on Cifar10.
	\item For such kernels, we provide an exact description of the RKHS functions and their norm, illustrating representation benefits of multiple convolutional and pooling layers for capturing additive and interaction models on patches with certain spatial regularities among interaction terms.
	\item We provide generalization bounds that illustrate the benefits of architectural choices such as pooling and patches for learning additive interaction models with spatial invariance in the interaction terms, namely, improvements in sample complexity by polynomial factors in the size of the input signal.
\end{itemize}

\vspace{-0.2cm}
\paragraph{Related work.}
Convolutional kernel networks were introduced by~\citet{mairal2014convolutional,mairal2016end}. Empirically, they used kernel approximations to improve computational efficiency, while we evaluate the exact kernels in order to assess their best performance, as in~\citep{arora2019exact,li2019enhanced,shankar2020neural}.
\citet{bietti2019group,bietti2019inductive} show invariance and stability properties of its RKHS functions, and provide upper bounds on the RKHS norm for some specific functions~\citep[see also][]{zhang2016convexified}; in contrast, we provide exact characterizations of the RKHS norm, and study generalization benefits of certain architectures.
\citet{scetbon2020harmonic} study statistical properties of simple convolutional kernels without pooling, while we focus on the role of architecture choices with an emphasis on pooling.
\citet{cohen2016convolutional,cohen2017inductive,mhaskar2016deep,poggio2017and} study expressivity and approximation with models that resemble CNNs, showing benefits thanks to hierarchy or local interactions, but such models are not known to be learnable with tractable algorithms, while we focus on (tractable) kernels.
Regularization properties of convolutional models were also considered in~\citep{gunasekar2018implicit,heckel2020denoising}, but in different regimes or architectures than ours.
\cite{li2021convolutional,malach2020computational} study benefits of convolutional networks with efficient algorithms, but do not study the gains of pooling.
\cite{du2018many} study sample complexity of learning CNNs, focusing on parametric rather than non-parametric models.
\cite{mei2021learning} study statistical benefits of global pooling for learning invariant functions, but only consider one layer with full-size patches.
Concurrently to our work, \cite{favero2021locality,misiakiewicz2021learning} study benefits of local patches, but focus on one-layer models.

\section{Deep Convolutional Kernels}
\label{sec:background}

In this section, we recall the construction of multi-layer convolutional kernels on discrete signals, following most closely the convolutional kernel network (CKN) architectures studied by~\cite{mairal2016end,bietti2019group}.
These architectures rely crucially on pooling layers, typically with Gaussian filters, which make them empirically effective even with just two convolutional layers.
These kernels define function spaces that will be the main focus of our theoretical study of approximation and generalization in the next sections.
In particular, when learning a target function of the form $f^*(x) = \sum_i f_i(x)$, we will show that they are able to efficiently exploit two useful properties of~$f^*$: (locality) each~$f_i$ may depend on only one or a few small localized patches of the signal; (invariance) many different terms~$f_i$ may involve the same function applied to different input patches.
We provide further background and motivation in Appendix~\ref{sec:background_appx}.

For simplicity, we will focus on discrete 1D input signals, though one may easily extend our results to 2D or higher-dimensional signals. We will assume periodic signals in order to avoid difficulties with border effects, or alternatively, a cyclic domain~$\Omega = \Z / |\Omega| \Z$.
A convolutional kernel of depth~$L$ may then be defined for input signals~$x, x' \in L^2(\Omega, \R^p)$ by~$K_L(x, x') = \langle \Psi(x), \Psi(x') \rangle$, through the explicit feature map
\begin{equation}
\label{eq:ckn_feature_map}
\Psi(x) = A_L M_L P_L \cdots A_1 M_1 P_1 x.
\end{equation}
Here, $P_\ell, M_\ell$ and~$A_\ell$ are linear or non-linear operators corresponding to \emph{patch extraction}, \emph{kernel mapping} and \emph{pooling}, respectively, and are described below.
They operate on \emph{feature maps} in~$L^2(\Omega_\ell)$ (with~$\Omega_0 \!=\! \Omega$) with values in different Hilbert spaces, starting from~$\Hcal_0 := \R^p$, and are defined below.
An illustration of this construction is given in Figure~\ref{fig:ckn_construction}.

\begin{wrapfigure}{r}{.4\textwidth}
	\centering
	\includegraphics[width=.36\columnwidth]{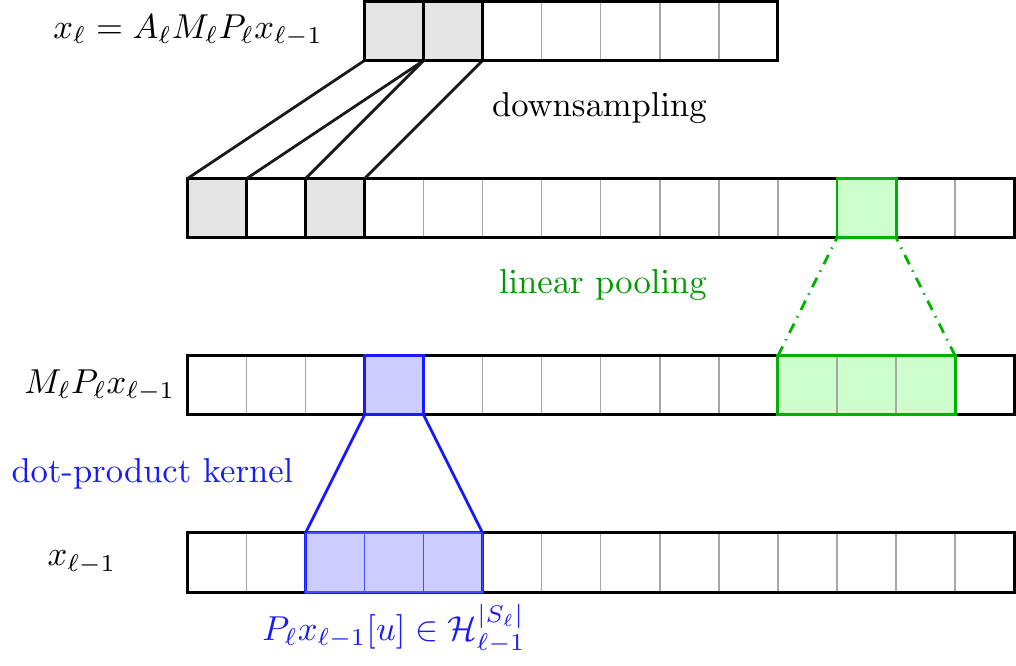}~~~~~
	\caption{Convolutional kernel.}
	\label{fig:ckn_construction}
\end{wrapfigure}

\paragraph{Patch extraction.}
Given a patch shape~$S_\ell \subset \Omega_{\ell-1}$, such as~$S_\ell = [-1, 0, 1]$ for one-dimensional patches of size~$3$,
the operator~$P_\ell$ is defined for~$x \in L^2(\Omega_{\ell-1}, \Hcal_{\ell - 1})$ by\footnote{$L^2(\Omega, \Hcal)$ denotes the space of~$\Hcal$-valued signals~$x$ such that~$\|x\|^2_{L^2(\Omega, \Hcal)} := \sum_{u \in \Omega} \|x[u]\|_\Hcal^2 < \infty$.}
\begin{equation*}
P_\ell x[u] = (x[u + v])_{v \in S_\ell} \in \Hcal_{\ell-1}^{|S_\ell|}.
\end{equation*}

\paragraph{Kernel mapping.}
The operators~$M_\ell$ perform a non-linear embedding of patches into a new Hilbert space using dot-product kernels.
We consider homogeneous dot-product kernels given for~$z, z' \in \Hcal_{\ell-1}^{|S_\ell|}$ by
\begin{equation}
\label{eq:patch_kernel}
k_\ell(z, z') = \|z\| \|z'\| \kappa_\ell \left( \frac{\langle z, z' \rangle}{\|z\| \|z'\|} \right) = \langle \varphi_\ell(z), \varphi_\ell(z') \rangle_{\Hcal_\ell},
\end{equation}
where~$\varphi_\ell : \Hcal_{\ell-1}^{|S_\ell|} \to \Hcal_\ell$ is a feature map for the kernel.
The kernel functions take the form~$\kappa_\ell(u) = \sum_{j \geq 0} b_j u^j$ with~$b_j \geq 0$.
This includes the exponential kernel~$\kappa(u) = e^{\alpha (u - 1)}$ (\ie, Gaussian kernel on the sphere) and the arc-cosine kernel arising from random ReLU features~\citep{cho2009kernel}, for which our construction is equivalent to that of the conjugate or NNGP kernel for an infinite-width random ReLU network with the same architecture.
The operator~$M_\ell$ is then defined pointwise by
\begin{equation}
M_\ell x[u] = \varphi_\ell(x[u]).
\end{equation}
At the first layer on image patches, these kernels lead to functional spaces consisting of homogeneous functions with varying degrees of smoothness on the sphere, depending on the properties of the kernel~\citep{bach2017breaking,smola2001regularization}.
At higher layers, our theoretical analysis will also consider simple polynomial kernels such as~$k_\ell(z, z') = (\langle z, z' \rangle)^r$, in which case the feature map may be explicitly written in terms of tensor products.
For instance,~$r = 2$ gives~$\varphi_\ell(z) = z \otimes z$ and~$\Hcal_\ell = (\Hcal_{\ell-1}^{|S_\ell|})^{\otimes 2} = (\Hcal_{\ell-1} \otimes \Hcal_{\ell-1})^{|S_\ell| \times |S_\ell|}$.
See Appendix~\ref{sub:dp_kernels_tp} for more background on dot-product kernels, their tensor products, and their regularization properties.

\paragraph{Pooling.}
Finally, the pooling operators~$A_\ell$ perform local averaging through convolution with a filter~$h_\ell[u]$, which we may consider to be symmetric ($h_\ell[-u] \!=:\! \bar h_\ell[u] \!=\! h_\ell[u]$).
In practice, the pooling operation is often followed by downsampling by a factor~$s_\ell$, in which case the new signal~$A_\ell x$ is defined on a new domain~$\Omega_\ell$ with~$|\Omega_\ell| = |\Omega_{\ell-1}| / s_\ell$, and we may write for~$x \in L^2(\Omega_{\ell-1})$ and~$u \in \Omega_\ell$,
\begin{equation}
A_\ell x[u] = \sum_{v \in \Omega_{\ell-1}} h_\ell[s_\ell u - v] x[v].
\end{equation}
Our experiments consider Gaussian pooling filters with a size and bandwidth proportional to the downsampling factor~$s_\ell$, following~\citet{mairal2016end}, namely, size~$2 s_\ell + 1$ and bandwidth~$\sqrt{2}s_\ell$.
In Section~\ref{sec:approximation}, we will often assume no downsampling for simplicity, in which case we may see the filter bandwidth as increasing with the layers.

\paragraph{Links with other convolutional kernels.}
We note that our construction closely resembles kernels derived from infinitely wide convolutional networks, known as conjugate or NNGP kernels~\citep{garriga2019deep,novak2019bayesian}, and is also related to convolutional neural tangent kernels~\citep{arora2019exact,bietti2019inductive,yang2019scaling}.
The Myrtle family of kernels~\citep{shankar2020neural} also resembles our models, but they use small average pooling filters instead of Gaussian filters, which leads to deeper architectures due to smaller receptive fields.

\section{Approximation with (Deep) Convolutional Kernels}
\label{sec:approximation}

In this section, we present our main results on the approximation properties of convolutional kernels,
by characterizing functions in the RKHS as well as their norms.
We begin with the one-layer case, which does not capture interactions between patches but highlights the role of pooling,
before moving multiple layers, where interaction terms play an important role.
Proofs are given in Appendix~\ref{sec:proofs}.

\subsection{The One-Layer Case}
\label{sub:approx_shallow}

We begin by considering the case of a single convolutional layer, which can already help us illustrate the role of patches and pooling.
Here, the kernel is given by
\begin{equation*}
K_1(x, x') = \langle A \Phi(x), A \Phi(x') \rangle_{L^2(\Omega, \Hcal)},
\end{equation*}
with~$\Phi(x)[u] = \varphi(x_u)$, where we use the shorthand~$x_u = Px[u]$ for the patch at position~$u$.
We now characterize the RKHS of~$K_1$, showing that it consists of additive models of functions in~$\Hcal$ defined on patches, with spatial regularities among the terms, induced by the pooling operator~$A$.
(\textbf{Notation}: $A^{*}$ and~$A^{\dagger}$ denote the adjoint and pseudo-inverse of an operator~$A$, respectively.)

\begin{proposition}[RKHS for 1-layer CKN.]
\label{prop:one_layer}
The RKHS of~$K_1$ consists of functions~$f(x) = \langle G, \Phi(x) \rangle_{L^2(\Omega, \Hcal)} = \sum_{u \in \Omega} G[u](x_u)$,
with~$G \in \text{Range}(A^*)$, and with RKHS norm
\begin{align}
\|f\|_{\Hcal_{K_1}}^2 = \inf_{G \in L^2(\Omega, \Hcal)} & \|A^{\dagger *} G\|_{L^2(\Omega, \Hcal)}^2 \text{~~~~s.t.~~~~} f(x) = \sum_{u \in \Omega} G[u](x_u)  \label{eq:one_layer_norm}
\end{align}
\end{proposition}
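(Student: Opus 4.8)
The plan is to view~$K_1$ through its explicit feature map~$\Psi(x) = A\Phi(x) \in L^2(\Omega,\Hcal)$ (with $\Phi(x)[u] = \varphi(x_u)$) and invoke the standard description of an RKHS generated by a feature map into a Hilbert space: $\Hcal_{K_1}$ consists exactly of the functions $f_w(x) = \langle w, \Psi(x)\rangle_{L^2(\Omega,\Hcal)}$ for $w \in L^2(\Omega,\Hcal)$, with $\|f\|_{\Hcal_{K_1}}^2 = \min\{\|w\|^2 : f_w = f\}$, the minimum being attained at the unique $w$ lying in $\overline{\vect}\{\Psi(x):x\}$.

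The first step is to ``move'' the pooling operator onto~$w$ via its adjoint: $\langle w, A\Phi(x)\rangle = \langle A^* w, \Phi(x)\rangle = \sum_{u\in\Omega}(A^*w)[u](x_u)$. Writing $G := A^* w$, every RKHS function takes the claimed form $f(x) = \sum_u G[u](x_u)$ with $G \in \Range(A^*)$, and conversely any $G \in \Range(A^*)$ equals $A^*w$ for some~$w$, so functions of this form do lie in $\Hcal_{K_1}$; this gives the first assertion.

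For the norm, I would split the minimization over~$w$ into an outer one over $G = A^* w$ (equivalently, over all~$G$ representing~$f$) and an inner one over $\{w : A^*w = G\}$ for fixed~$G$. The inner problem is a minimum-norm linear system, solved by $w = (A^*)^\dagger G = A^{\dagger *} G$ with value $\|A^{\dagger *}G\|^2$; here I would use that~$A$ acts only on the spatial variable and hence has closed range, which makes the pseudo-inverse well behaved and gives $(A^*)^\dagger = (A^\dagger)^*$. This produces the formula with the infimum restricted to $G \in \Range(A^*)$. To match the stated formula, I would then note that $\ker(A^{\dagger *}) = \Range(A^*)^\perp$, so $\|A^{\dagger *}G\|^2$ depends on~$G$ only through its orthogonal projection onto $\Range(A^*)$; one checks that replacing~$G$ by this projection still yields a valid representation of~$f$, so enlarging the feasible set from $\Range(A^*)$ to all of $L^2(\Omega,\Hcal)$ does not change the infimum.

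The feature-map description of $\Hcal_{K_1}$ and the adjoint manipulation are routine. The step I expect to require the most care is the two-stage optimization, i.e., cleanly disentangling the non-uniqueness of~$w$ given~$G$ (absorbed by $A^{\dagger *}$) from that of~$G$ given~$f$ (absorbed by the ``s.t.'' constraint), and in particular verifying that passing from the infimum over $\Range(A^*)$ to the infimum over all of $L^2(\Omega,\Hcal)$ is harmless.
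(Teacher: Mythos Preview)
Your approach is essentially the same as the paper's: both invoke the standard feature-map characterization of the RKHS (the paper's Theorem~\ref{thm:rkhs_explicit_map}), move~$A$ onto the test element via the adjoint, and reparametrize by~$G = A^* w$. Your two-stage optimization (outer over~$G \in \Range(A^*)$ representing~$f$, inner over~$w$ with~$A^* w = G$, solved by~$w = A^{\dagger *} G$) is exactly what the paper does, and correctly yields the formula with the infimum restricted to~$G \in \Range(A^*)$. The paper's proof stops there.

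Your final paragraph goes further, attempting to drop the restriction~$G \in \Range(A^*)$ by arguing that projecting any feasible~$G$ onto~$\Range(A^*)$ still represents~$f$. This step is not justified as written: if~$G = G_1 + G_2$ with~$G_1 \in \Range(A^*)$ and~$G_2 \in \Range(A^*)^\perp = \ker(A)$, there is no reason for~$\langle G_2, \Phi(x)\rangle = \sum_u G_2[u](x_u)$ to vanish for all~$x$ (e.g.\ with global pooling on two patches, $G_2 = (h,-h)$ gives~$h(x_0)-h(x_1)$), so~$G_1$ need not represent~$f$. The paper's own proof does not attempt this extension either; the discussion immediately following the proposition makes clear that the constraint~$G \in \Range(A^*)$ is meant to be active in the infimum. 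So your core argument matches the paper's and is correct; only the last cosmetic step overreaches.
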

Note that if~$A^*$ is not invertible (for instance in the presence of downsampling), the constraint~$G \in \Range(A^*)$ is active and~$A^{\dagger *}$ is its pseudo-inverse.
In the extreme case of global average pooling, we have $A = (1, \ldots, 1) \otimes Id : L^2(\Omega, \Hcal) \to \Hcal$, so that~$G \in \Range(A^*)$ is equivalent to~$G[u] = g$ for all~$u$, for some fixed~$g \in \Hcal$.
In this case, the penalty in~$\eqref{eq:one_layer_norm}$ is simply the squared RKHS norm~$\|g\|_{\Hcal}^2$.

In order to understand the norm~\eqref{eq:one_layer_norm} for general pooling, recall that~$A$ is a convolution operator with filter~$h_1$, hence its inverse (which we now assume exists for simplicity) may be easily computed in the Fourier basis.
In particular, for a patch~$z \in \R^{p|S_1|}$, defining the scalar signal~$g_z[u] = G[u](z)$, we may write the following using the reproducing property and linearity:
\begin{align*}
A^{\dagger *} G[u](z) &= (A^{-1})^\top g_z[u] = \Fcal^{-1}\diag(\Fcal \bar h_1)^{-1}\Fcal g_z[u],
\end{align*}
where~$\bar h_1[u] := h_1[-u]$ arises from transposition, ~$\Fcal$ is the discrete Fourier transform, and both~$\Fcal$ and~$A$ are viewed here as~$|\Omega| \times |\Omega|$ matrices.
From this expression, we see that by penalizing the RKHS norm of~$f$, we are implicitly penalizing the high frequencies of the signals~$g_z[u]$ for any~$z$, and this regularization is stronger when the pooling filter~$h_1$ has a fast spectral decay.
For instance, as the spatial bandwidth of~$h_1$ increases (approaching a global pooling operation), $\Fcal h_1$ decreases more rapidly, which encourages~$g_z[u]$ to be more smooth as a function of~$u$, and thus prevents~$f$ from relying too much on the location of patches.
If instead~$h_1$ is very localized in space (\eg, a Dirac filter, which corresponds to no pooling), $g_z[u]$ may vary much more rapidly as a function of~$u$, which then allows~$f$ to discriminate differently depending on the spatial location.
This provides a different perspective on the invariance properties induced by pooling.
If we denote~$\tilde G[u] = A^{\dagger *} G[u]$, the penalty writes
\begin{equation*}
\|\tilde G\|_{L^2(\Omega, \Hcal)}^2 = \sum_{u \in \Omega} \|\tilde G[u]\|^2_\Hcal.
\end{equation*}
Here, the RKHS norm~$\|\cdot\|_\Hcal$ also controls smoothness, but this time for functions~$\tilde G[u](\cdot)$ defined on input patches.
For homogeneous dot-product kernels of the form~\eqref{eq:patch_kernel}, the norm takes the form~$\|g\|_\Hcal = \|T^{-\frac{1}{2}}g\|_{L^2(\Sbb^{d-1})}$, where the regularization operator~$T^{-\frac{1}{2}}$ is the self-adjoint inverse square root of the integral operator for the patch kernel restricted to~$L^2(\Sbb^{d-1})$. For instance, when the eigenvalues of~$T$ decay polynomially, as for arc-cosine kernels, $T^{-\frac{1}{2}}$ behaves like a power of the spherical Laplacian~\citep[see][and Appendix~\ref{sub:dp_kernels_tp}]{bach2017breaking}.
Then we may write
\begin{equation*}
\|\tilde G\|_{L^2(\Omega, \Hcal)}^2 = \|((A^{-1})^\top \otimes T^{-\frac{1}{2}}) G\|_{L^2(\Omega) \otimes L^2(\Sbb^{d-1})}^2,
\end{equation*}
which highlights that the norm applies two regularization operators~$(A^{-1})^\top$ and~$T^{-\frac{1}{2}}$ independently on the spatial variable and the patch variable of $(u,z) \mapsto G[u](z)$, viewed here as an element of~$L^2(\Omega) \otimes L^2(\Sbb^{d-1})$.

\subsection{The Multi-Layer Case}
\label{sub:approx_deep}

We now study the case of convolutional kernels with more than one convolutional layer.
While the patch kernels used at higher layers are typically similar to the ones from the first layer, we show empirically on Cifar10 that they may be replaced by simple polynomial kernels with little loss in accuracy.
We then proceed by studying the RKHS of such simplified models, highlighting the role of depth for capturing interactions between different patches via kernel tensor products.

\begin{table}
\caption{Cifar10 test accuracy with 2-layer convolutional kernels with 3x3 patches and pooling/downsampling sizes [2,5], with different choices of patch kernels~$\kappa_1$ and~$\kappa_2$. The last model is similar to a 1-layer convolutional kernel.
See Section~\ref{sec:experiments} for experimental details.}
\label{tab:2layer_empirics}

\vspace{0.1cm}
\centering

\begin{tabular}{|c|c|c|c|c|c|c|}
\hline
$\kappa_1$-$\kappa_2$ & Exp-Exp & Exp-Poly3 & Exp-Poly2 & Poly2-Exp & Poly2-Poly2 & Exp-Lin \\
\hline
Test acc. & 87.9\% & 87.7\% & 86.9\% & 85.1\% & 82.2\% & 80.9\% \\
\hline
\end{tabular}
\end{table}

\paragraph{An empirical study.}
Table~\ref{tab:2layer_empirics} shows the performance of a given 2-layer convolutional kernel architecture, with different choices of patch kernels~$\kappa_1$ and~$\kappa_2$.
The reference model uses exponential kernels in both layers, following the construction in~\cite{mairal2016end}.
We find that replacing the second layer kernel by a simple polynomial kernel of degree~3,~$\kappa_2(u) = u^3$, leads to roughly the same test accuracy. By changing~$\kappa_2$ to~$\kappa_2(u) = u^2$, the test accuracy is only about~1\% lower, while doing the same for the first layer decreases it by about~3\%.
The shallow kernel with a single non-linear convolutional layer (shown in the last line of Table~\ref{tab:2layer_empirics}) performs significantly worse.
This suggests that the approximation properties described in Section~\ref{sub:approx_shallow} may not be sufficient for this task, while even a simple polynomial kernel of order~2 at the second layer may substantially improve things by capturing interactions, in a way that we describe below.

\paragraph{Two-layers with a quadratic kernel.}
Motivated by the above experiments, we now study the RKHS of a two-layer kernel~$K_2(x, x') = \langle \Psi(x), \Psi(x') \rangle_{L^2(\Omega_2, \Hcal_2)}$ with~$\Psi$ as in~\eqref{eq:ckn_feature_map} with~$L = 2$,
where the second-layer uses a quadratic kernel\footnote{For simplicity we study the quadratic kernel instead of the homogeneous version used in the experiments, noting that it still performs well (78.0\% instead of~79.4\% on 10k examples).} on patches~$k_2(z, z') = (\langle z, z' \rangle)^2$.
An explicit feature map for~$k_2$ is given by~$\varphi_2(z) = z \otimes z$.
Denoting by~$\Hcal$ the RKHS of~$k_1$, the patches~$z$ lie in~$\Hcal^{|S_2|}$, thus we may view~$\varphi_2$ as a feature map into a Hilbert space~$\Hcal_2 = (\Hcal \otimes \Hcal)^{|S_2| \times |S_2|}$ (by isomorphism to~$\Hcal^{|S_2|} \otimes \Hcal^{|S_2|}$).
The following result characterizes the RKHS of such a 2-layer convolutional kernel, showing that it consists of additive models of interaction functions in~$\Hcal \otimes \Hcal$ on pairs of patches, with different spatial regularities on the interaction terms induced by the two pooling operations.
(\textbf{Notation}: we use the notations~$\diag(M)[u] = M[u,u]$ for~$M \in L^2(\Omega^2)$, $\diag(x)[u,v] = \1\{u \!=\! v\} x[u]$ for~$x \in L^2(\Omega)$, and~$L_c$ is the translation operator~$L_c x[u] = x[u-c]$.)

\begin{proposition}[RKHS of 2-layer CKN with quadratic~$k_2$]
\label{prop:two_layer_quad}
Let~$\Phi(x) = (\varphi_1(x_u) \otimes \varphi_1(x_v))_{u,v \in \Omega} \in L^2(\Omega^2, \Hcal \otimes \Hcal)$.
The RKHS of~$K_2$ when~$k_2(z, z') = (\langle z, z' \rangle)^2$ consists of functions of the form
\begin{align*}
f(x) &=\! \sum_{p,q \in S_2} \langle G_{pq}, \Phi(x) \rangle =\! \sum_{p,q \in S_2} \sum_{u,v \in \Omega} G_{pq}[u,v](x_u, x_v),
\end{align*}
where~$G_{pq} \in L^2(\Omega^2, \Hcal \otimes \Hcal)$ obeys the constraints~$G_{pq} \in \Range(E_{pq})$ and~$\diag((L_p A_1 \otimes L_q A_1)^{\dagger *} G_{pq}) \in \Range(A_2^*)$. Here,~$E_{pq} : L^2(\Omega_1) \to L^2(\Omega^2)$ is a linear operator given by
\begin{equation}
\label{eq:epq}
E_{pq} x = (L_{p} A_1 \otimes L_{q} A_1)^{*}\diag(x).
\end{equation}

The squared RKHS norm~$\|f\|_{\Hcal_{K_2}}^2$ is then equal to the minimum over such decompositions of the quantity
\begin{equation}
\label{eq:norm_two_layer}
\sum_{p,q \in S_2} \|A_2^{\dagger *} \diag((L_p A_1 \otimes L_q A_1)^{\dagger *} G_{pq})\|^2_{L^2(\Omega_2, \Hcal \otimes \Hcal)}.
\end{equation}
\end{proposition}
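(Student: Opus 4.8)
The plan is to reduce the two-layer statement to the one-layer characterization of Proposition~\ref{prop:one_layer} applied twice, by exploiting the explicit tensor-product structure of the quadratic feature map. First I would write out the feature map $\Psi(x) = A_2 M_2 P_2 A_1 M_1 P_1 x$ explicitly: the first-layer output is the feature map $\Phi_1(x)[w] = \varphi_1(x_w)$ pooled by $A_1$, so the patches fed to the second layer are $P_2 A_1 \Phi_1(x)[u] = ((A_1\Phi_1(x))[u+p])_{p \in S_2}$, and applying $\varphi_2(z) = z\otimes z$ coordinate-wise gives, for each $u$ and each pair $p,q \in S_2$, the tensor $(A_1\Phi_1(x))[u+p] \otimes (A_1\Phi_1(x))[u+q]$. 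Writing $(A_1\Phi_1(x))[u+p] = (L_p A_1 \Phi_1(x))[u]$ and using $A_1 \Phi_1(x) = A_1 \Phi(x)$ componentwise, I would express $\Psi(x)$ as a block-structured signal indexed by $(p,q) \in S_2^2$ and $u \in \Omega_2$, with $(p,q)$-block equal to $A_2$ applied to $u \mapsto (L_p A_1 \otimes L_q A_1)\Phi(x)[u,u] = (L_p A_1 \otimes L_q A_1)\diag(\Phi(x))[u]$, where $\Phi(x)[u,v] = \varphi_1(x_u)\otimes\varphi_1(x_v)$ as in the statement.

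The second step is a generic RKHS fact: if a kernel has feature map $\Psi(x)$ that decomposes as a direct sum $\Psi(x) = \bigoplus_{p,q} \Psi_{pq}(x)$ over an index set, then the RKHS consists of sums $f = \sum_{p,q} f_{pq}$ with $f_{pq}$ in the RKHS of the $(p,q)$-th kernel, and $\|f\|^2 = \inf \sum_{p,q}\|f_{pq}\|^2_{pq}$ over such decompositions. This handles the outer sum over $p,q \in S_2$. For each fixed $(p,q)$, the feature map is $x \mapsto A_2 B_{pq}(x)$ where $B_{pq}(x)[u] = (L_p A_1 \otimes L_q A_1)\diag(\Phi(x))[u]$ is itself a composition of a fixed linear map $C_{pq} := L_p A_1 \otimes L_q A_1$ with the ``diagonal-of-$\Phi$'' feature map. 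I would then apply the reasoning behind Proposition~\ref{prop:one_layer} (pooling $A_2$ composed with an inner feature map) to get that $f_{pq}(x) = \langle H_{pq}, B_{pq}(x)\rangle$ for $H_{pq} \in \Range(A_2^*)$ with norm $\inf \|A_2^{\dagger *} H_{pq}\|^2$, and unfold one more layer: writing $H_{pq} = C_{pq}^{\dagger *}$-image constraints, $f_{pq}(x) = \langle \widetilde G_{pq}, \diag(\Phi(x))\rangle = \sum_u \widetilde G_{pq}[u](x_u$ in slot $p$, $x_u$ in slot $q)$, which after re-indexing the diagonal back to the pair form $G_{pq}[u,v]$ and tracking which linear operator must be inverted gives exactly the operator $E_{pq}$ in~\eqref{eq:epq}, the range constraints $G_{pq}\in\Range(E_{pq})$ and $\diag((L_pA_1\otimes L_qA_1)^{\dagger *}G_{pq}) \in \Range(A_2^*)$, and the norm~\eqref{eq:norm_two_layer}. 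The key bookkeeping is that $E_{pq}x = C_{pq}^* \diag(x)$ encodes the two-step pre-image: first that $G_{pq}$ lies in the range of $C_{pq}^*$ restricted to diagonal signals (so that $(C_{pq})^{\dagger *}G_{pq}$ is well-defined and diagonal, letting us take its $\diag$), and second that this diagonal lies in $\Range(A_2^*)$.

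The main obstacle I expect is the careful handling of the pseudo-inverses and range constraints when the various operators are non-injective — in particular, making precise that $(L_p A_1 \otimes L_q A_1)^{\dagger *}$ acting on $G_{pq}$ produces a well-defined element whose diagonal can then be extracted, and that the composed constraint $G_{pq} \in \Range(E_{pq})$ with $E_{pq}x = (L_pA_1 \otimes L_qA_1)^*\diag(x)$ is exactly the right condition (no more, no less) for $f_{pq}$ to lie in the relevant RKHS. This requires being attentive to the order of operations: $\diag$ is applied \emph{after} the adjoint-pseudo-inverse of the product pooling operator, and the resulting signal on $\Omega$ is then constrained to lie in $\Range(A_2^*)$ before applying $A_2^{\dagger *}$. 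A secondary subtlety is that $\diag(\Phi(x))[u] = \varphi_1(x_u)\otimes\varphi_1(x_u)$ is a rank-one tensor, so the inner feature map is not surjective onto $\Hcal\otimes\Hcal$; but since the RKHS characterization only asserts the \emph{form} of $f$ and infimizes the norm over decompositions, this non-surjectivity is harmless and does not need to be resolved — the diagonal restriction is exactly what the $\diag$ in $E_{pq}$ captures. I would also double-check the translation operators: $L_p A_1$ versus $A_1 L_p$ and the fact that $L_p$ commutes appropriately with the Fourier-diagonal structure of $A_1$, so that $(L_p A_1)^{\dagger *} = L_p (A_1)^{\dagger *}$ up to the obvious sign conventions on $\bar h_1$.
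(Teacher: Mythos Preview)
Your proposal is correct and follows essentially the same route as the paper: both write out the feature map's block structure over $(p,q)\in S_2^2$, invoke the explicit-feature-map characterization of the RKHS (Theorem~\ref{thm:rkhs_explicit_map}), and then push the adjoints of $A_2$, $\diag$, and $L_pA_1\otimes L_qA_1$ through to pass from the dual variable $F_{pq}$ to $G_{pq}$, recording the range constraints along the way. Your framing as ``apply the one-layer reasoning twice'' is just a convenient reorganization of the paper's single-pass adjoint computation; the subtleties you flag (pseudo-inverse bookkeeping, sign conventions for $L_p$) are real but minor and are handled in the paper exactly as you anticipate.
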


\begin{figure}[tb]
	\centering
	\includegraphics[width=.15\textwidth]{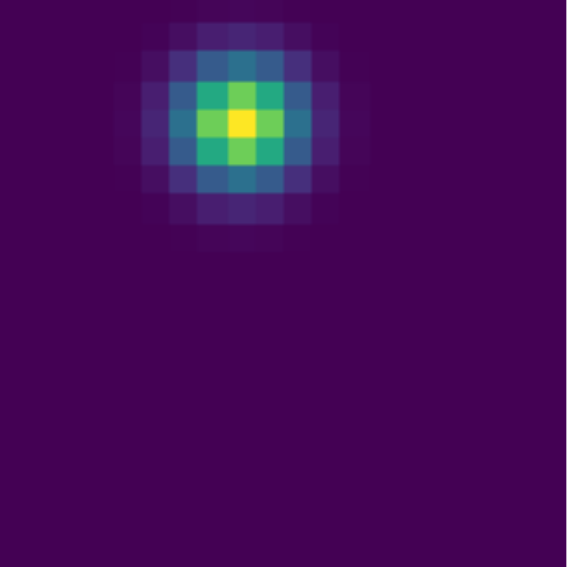}~~
	\includegraphics[width=.15\textwidth]{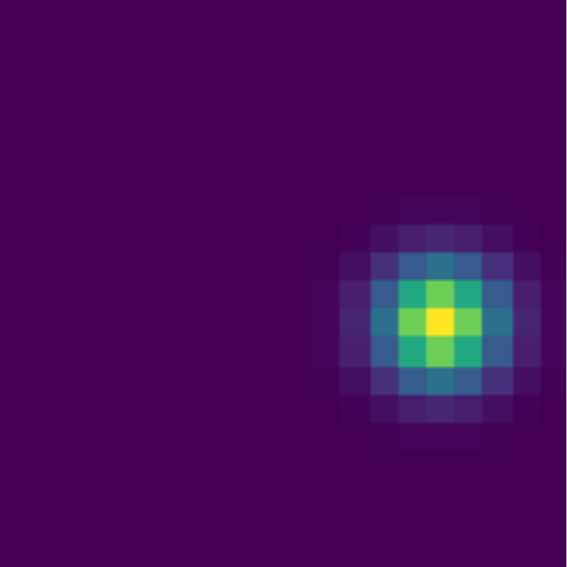}~~
	\includegraphics[width=.15\textwidth]{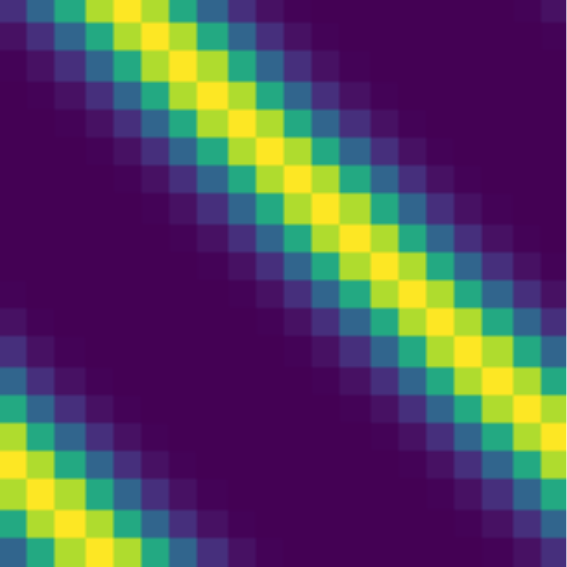}
	\caption{Display of the 2D response~$E_{pq}x \in L^2(\Omega^2)$ of the operator in~\eqref{eq:epq} for various 1D inputs~$x \in L^2(\Omega)$. (left/center) Dirac inputs~$x = \delta_u$ centered at two different locations~$u$; (right) Constant input~$x = \textbf{1}$. The responses are localized on the~$p-q$ diagonal, corresponding to interactions between two patches at distance around~$p - q$. Here, we took $(p,q)=(4,0)$, with a signal size~$|\Omega| = 20$.}
	\label{fig:epq}
\end{figure}

As discussed in the one-layer case, the inverses should be replaced by pseudo-inverses if needed, \eg, when using downsampling.
In particular, if~$A_2^*$ is singular, the second constraint plays a similar role to the one-layer case.
In order to understand the first constraint, we show in Figure~\ref{fig:epq} the outputs of~$E_{pq} x$ for Dirac delta signals~$x[v] = \delta_u[v]$.
We can see that if the pooling filter~$h_1$ has a small support of size~$m$, then~$G_{pq}[u-p,v-q]$ must be zero when~$|u -v| > m$, which highlights that the functions in~$G_{pq}$ may only capture interactions between pairs of patches where the (signed) distance between the first and the second is close to~$p - q$.

The penalty then involves operators~$L_p A_1 \otimes L_q A_1$, which may be seen as separable 2D convolutions on the ``images''~$G_{pq}[u,v]$.
Then, if~$z, z' \in \R^{p|S_1|}$ are two fixed patches, defining~$g_{z,z'}[u,v] = G_{pq}[u,v](z,z')$, we have, assuming~$A_1$ is invertible and symmetric,
\begin{align*}
(L_p A_1 \otimes L_q A_1)^{\dagger *} G_{pq}[u,v](z,z') &= (A_1 \otimes A_1)^{-1} g_{z,z'}[u-p,v-q] \\
	&= \mathcal F_2^{-1} \diag(\mathcal F_2(h_1 \otimes h_1))^{-1} \mathcal F_2 g_{z,z'}[u-p,v-q],
\end{align*}
where~$\mathcal F_2 = \mathcal F \otimes \mathcal F$ is the 2D discrete Fourier transform.
Thus, this penalizes the variations of~$g_{z,z'}$ in both dimensions, encouraging the interaction functions to not rely too strongly on the specific positions of the two patches.
This regularization is stronger when the spatial bandwidth of~$h_1$ is large, since this leads to a more localized filter in the frequency domain, with stronger penalties on high frequencies.
In addition to this 2D smoothness, the penalty in Proposition~\ref{prop:two_layer_quad} also encourages smoothness along the~$p-q$ diagonal of this resulting 2D image using the pooling operator~$A_2$.
This has a similar behavior to the one-layer case, where the penalty prevents the functions from relying too much on the absolute position of the patches.
Since~$A_2$ typically has a larger bandwidth than~$A_1$, interaction functions~$G_{pq}[u,u + r]$ are allowed to vary with~$r$ more rapidly than with~$u$.
The regularity of the resulting ``smoothed'' interaction terms as a function of the input patches is controlled by the RKHS norm of the tensor product kernel~$k_1 \otimes k_1$ as described in Appendix~\ref{sub:dp_kernels_tp}.

\paragraph{Extensions.}
When using a polynomial kernel~$k_2(z,z') = (\langle z, z' \rangle)^\alpha$ with~$\alpha > 2$, we obtain a similar picture as above, with higher-order interaction terms.
For example, if~$\alpha = 3$, the RKHS contains functions with interaction terms of the form~$G_{pqr}[u,v,w](x_u, x_v, x_w)$,
with a penalty
\begin{equation*}
\sum_{p,q,r \in S_2} \|A_2^{\dagger *} \diag((A_{1p} \otimes A_{1q} \otimes A_{1r})^{\dagger *} G_{pqr})\|^2_{L^2(\Omega_2, \Hcal^{\otimes3})},
\end{equation*}
where~$A_{1c} = L_c A_1$. Similarly to the quadratic case, the first-layer pooling operator encourages smoothness with respect to relative positions between patches, while the second-layer pooling penalizes dependence on the global location.
One may extend this further to higher orders to capture more complex interactions, and our experiments suggest that a two-layer kernel of this form with a degree-4 polynomial at the second layer may achieve state-of-the-art accuracy for kernel methods on Cifar10 (see Table~\ref{tab:ckn_comparison}).
We note that such fixed-order choices for~$\kappa_2$ lead to convolutional kernels that lower-bound richer kernels with, \eg, an exponential kernel at the second layer, in the Loewner order on positive-definite kernels.
This imples in particular that the RKHS of these ``richer'' kernels also contains the functions described above.
For more than two layers with polynomial kernels, one similarly obtains higher-order interactions, but with different regularization properties (see Appendix~\ref{sec:extensions_appx}).

\section{Generalization Properties}
\label{sec:generalization}

In this section, we study generalization properties of the convolutional kernels studied in Section~\ref{sec:approximation}, and show improved sample complexity guarantees for architectures with pooling and small patches when the problem exhibits certain invariance properties.

\paragraph{Learning setting.}
We consider a non-parametric regression setting with data distribution~$\rho$ over~$(x,y)$, where the goal is to minimize~$R(f) = \E_{(x,y)\sim\rho}[(y - f(x))^2]$.
We denote by~$f^* = \E_\rho [y|x] = \arg\min_f R(f)$ the regression function, and assume~$f^* \in \Hcal$ for some RKHS~$\Hcal$ with kernel~$K$.
Without any further assumptions on the kernel, we have the following generalization bound on the excess risk for the kernel ridge regression (KRR) estimator, denoted~$\hat f_n$ (see Proposition~\ref{prop:krr_bound} in Appendix~\ref{sec:proofs}):
\begin{equation}
\E[R(\hat f_n) - R(f^*)] \leq C \|f^*\|_\Hcal \sqrt{\frac{\tau_\rho^2 \E_{x \sim \rho_X}[K(x,x)]}{n}},
\end{equation}
where~$\rho_X$ is the marginal distribution of~$\rho$ on inputs~$x$, $C$ is an absolute constant, and~$\tau_\rho^2$ is an upper bound on the conditional noise variance~$\Var[y|x]$.
We note that this~$1/\sqrt{n}$ rate is optimal if no further assumptions are made on the kernel~\citep{caponnetto2007optimal}.
The quantity~$\E_{x \sim \rho_X}[K(x,x)]$ corresponds to the trace of the covariance operator, and thus provides a global control of eigenvalues through their sum, which will already highlight the gains that pooling can achieve.
Faster rates can be achieved, \eg, when assuming certain eigenvalue decays on the covariance operator, or when further restricting~$f^*$.
We discuss in Appendix~\ref{sec:fast_rates} how similar gains to those described in this section can extend to fast rate settings under specific scenarios.

\paragraph{One-layer CKN with invariance.}
As discussed in Section~\ref{sec:approximation}, the RKHS of 1-layer CKNs consists of sums of functions that are localized on patches, each belonging to the RKHS~$\Hcal$ of the patch kernel~$k_1$.
The next result illustrates the benefits of pooling when~$f^*$ is translation invariant.

\begin{proposition}[Generalization for 1-layer CKN.]
\label{prop:1layer_generalization}
Assume~$f^*(x) = \sum_{u \in \Omega} g(x_u)$ with~$g \in \Hcal$ of minimal norm, and assume~$\E_{x\sim \rho_X}[k_1(x_u, x_v)] \leq \sigma_{u - v}^2$ for some~$(\sigma_r^2)_{r \in \Omega}$.
For a 1-layer CKN~$K_1$ with any pooling filter~$h \geq 0$ with~$\|h\|_1 = 1$, we have~$\|f^*\|_{\Hcal_{K_1}} = \sqrt{|\Omega|} \|g\|_\Hcal$, and KRR satisfies
\begin{equation}
\label{eq:1layer_generalization}
\E R(\hat f_n) - R(f_*) \lesssim \frac{|\Omega|\|g\|_\Hcal}{\sqrt{n}} \sqrt{\tau_\rho^2 \sum_{r \in \Omega} \langle h, L_r h \rangle \sigma_r^2}.
\end{equation}
\end{proposition}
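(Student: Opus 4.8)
The plan is to combine the variational characterization of the $K_1$-norm from Proposition~\ref{prop:one_layer} with the generic kernel ridge regression bound of Proposition~\ref{prop:krr_bound}. The latter only involves two quantities about the target and the kernel: the norm $\|f^*\|_{\Hcal_{K_1}}$ and the trace term $\E_{x\sim\rho_X}[K_1(x,x)]$, so it suffices to evaluate each and substitute.

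First I would compute the norm. Since $f^*(x) = \sum_{u\in\Omega} g(x_u)$, the spatially constant choice $G[u] = g$ for all $u$ is feasible in~\eqref{eq:one_layer_norm}. Because $A$ is convolution with a filter $h \ge 0$ with $\|h\|_1 = 1$, the constant signal $\mathbf{1}$ is a fixed point of $A$, $A^*$, $A^\dagger$ and hence $A^{\dagger*}$ (in the no-downsampling setting of Section~\ref{sec:approximation}); therefore $G = \mathbf{1}\otimes g \in \Range(A^*)$ and $A^{\dagger*}G = \mathbf{1}\otimes g$, giving this decomposition squared cost $|\Omega|\,\|g\|_\Hcal^2$. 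This already yields the upper bound $\|f^*\|_{\Hcal_{K_1}} \le \sqrt{|\Omega|}\,\|g\|_\Hcal$, which is all that the generalization bound actually requires. For equality, I would argue optimality of the constant decomposition by symmetrization: $f^*$ is translation invariant, $f^*(L_c x) = f^*(x)$, so for any feasible $\tilde G := A^{\dagger*}G$ the translate $\tilde G^{(c)}[u] := \tilde G[u-c]$ is again feasible (the shifted constraint reads $f^*(L_{-c}x)=f^*(x)$) and has the same $L^2(\Omega,\Hcal)$ norm; averaging over $c\in\Omega$ and using convexity of the squared norm, the infimum is attained at a spatially constant $\tilde G = \mathbf{1}\otimes\tilde g$, for which the constraint becomes $\sum_u \tilde g(x_u) = f^*(x)$ and the cost $|\Omega|\,\|\tilde g\|_\Hcal^2$; minimizing over such $\tilde g$ picks out precisely the minimal-norm representer $g$, so $\|f^*\|_{\Hcal_{K_1}}^2 = |\Omega|\,\|g\|_\Hcal^2$.

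Next I would bound the trace term. Expanding the pooled feature map, $K_1(x,x) = \|A\Phi(x)\|^2_{L^2(\Omega,\Hcal)} = \sum_{u}\sum_{v,w} h[u-v]\,h[u-w]\,k_1(x_v,x_w)$; taking expectations, applying $\E_x[k_1(x_v,x_w)] \le \sigma_{v-w}^2$, and reindexing with $a = u-v$ and $r = v-w$, the inner double sum equals $\sum_{r\in\Omega}\big(\sum_a h[a]h[a+r]\big)\sigma_r^2 = \sum_{r\in\Omega}\langle h, L_r h\rangle\,\sigma_r^2$ using the symmetry of $h$, and this is independent of $u$, so $\E_x[K_1(x,x)] \le |\Omega|\sum_{r\in\Omega}\langle h, L_r h\rangle\,\sigma_r^2$. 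Substituting $\|f^*\|_{\Hcal_{K_1}} = \sqrt{|\Omega|}\|g\|_\Hcal$ and this trace bound into Proposition~\ref{prop:krr_bound} gives $\E R(\hat f_n) - R(f^*) \lesssim \sqrt{|\Omega|}\,\|g\|_\Hcal\sqrt{\tau_\rho^2\,|\Omega|\sum_r\langle h, L_r h\rangle\sigma_r^2/n}$, which rearranges to~\eqref{eq:1layer_generalization}. The only step that is not a routine computation is the matching lower bound on the norm via translation-averaging; I would also flag the caveat that the fixed-point property of $A$ on constants (used for $\Range(A^*)$ membership) relies on the no-downsampling assumption.
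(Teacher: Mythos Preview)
Your proposal is correct and follows essentially the same route as the paper: bound the trace $\E_x[K_1(x,x)]$ by the same direct expansion and reindexing, and obtain $\|f^*\|_{\Hcal_{K_1}} = \sqrt{|\Omega|}\,\|g\|_\Hcal$ via the constant decomposition $G = \mathbf{1}\otimes g$ together with the fixed-point property of $A^*$ on constants. Your translation-averaging argument for the matching lower bound is more explicit than the paper's, which simply asserts that minimality of~$g$ rules out any smaller-cost $G$; your symmetrization actually justifies restricting to spatially constant $\tilde G$, so it fills in a step the paper leaves terse.
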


The quantities~$\sigma_r^2$ can be interpreted as auto-correlations between patches at distance~$r$ from each other.
Note that if~$h$ is a Dirac filter, then~$\langle h, L_r h \rangle = \1\{r = 0\}$ (recall~$L_r h[u] = h[u - r]$), thus only~$\sigma_0^2$ plays a role in the bound, while if~$h$ is an average pooling filter, we have~$\langle h, L_r h \rangle = 1/|\Omega|$, so that~$\sigma_0^2$ is replaced by the average~$\bar \sigma^2 := \sum_r \sigma_r^2 / |\Omega|$.
Natural signals commonly display a decay in their auto-correlation functions, suggesting that a similar decay may be present in~$\sigma_r^2$ as a function of~$r$.
In this case,~$\bar \sigma^2$ may be much smaller than~$\sigma_0^2$,
which in turn yields an \emph{improved sample complexity guarantee} for learning such an~$f^*$ with global pooling, by a factor up to~$|\Omega|$ in the extreme case where~$\sigma_r^2$ vanishes for~$r \geq 1$ (since~$\bar \sigma^2 = \sigma_0^2 / |\Omega|$ in this case).
In Appendix~\ref{sec:generalization_models}, we provide simple models where this can be quantified.
For more general filters, such as local averaging or Gaussian filters, and assuming~$\sigma_r^2 \approx 0$ for~$r \ne 0$, the bound interpolates between no pooling and global pooling through the quantity~$\|h\|_2^2$.
While this yields a worse bound than global pooling on invariant functions, such filters enable learning functions that are not fully invariant, but exhibit some smoothness along the translation group, more efficiently than with no pooling.
It should also be noted that the requirement that~$g$ belongs to an RKHS~$\Hcal$ is much weaker when the patches are small, as this typically implies that~$g$ admits more than~$p|S|/2$ derivatives, a condition which becomes much stronger as the patch size grows.
In the fast rate setting that we study in Appendix~\ref{sec:fast_rates}, this also leads to better rates that only depend on the dimension of the patch instead of the full dimension (see Theorem~\ref{thm:fast_rates} in Appendix~\ref{sec:fast_rates}).

\paragraph{Two layers.}
When using two layers with polynomial kernels at the second layer, we saw in Section~\ref{sec:approximation} that the RKHS of CKNs consists of additive models of interaction terms of the order of the polynomial kernel used.
The next proposition illustrates how pooling filters and patch sizes at the second layer may affect generalization on a simple target function consisting of order-2 interactions.

\begin{proposition}[Generalization for 2-layer CKN.]
\label{prop:2layer_generalization}
Consider a 2-layer CKN~$K_2$ with quadratic~$k_2$, as in Proposition~\ref{prop:two_layer_quad}, and pooling filters~$h_1, h_2$ with~$\|h_1\|_1 = \|h_2\|_1 = 1$.
Assume that~$\rho_X$ satisfies~$\E_{x \sim \rho_X}[k_1(x_u, x_{u'}) k_1(x_v, x_{v'})] \leq \epsilon$ if~$u \ne u'$ or~$v \ne v'$, and~$\leq 1$ otherwise.
We have
\begin{equation}
\label{eq:two_layer_variance}
\E_{x\sim \rho_X}[K_2(x, x)] \leq |S_2|^2 |\Omega| \left(\sum_v \langle h_2, L_v h_2 \rangle \langle h_1, L_v h_1 \rangle^2 + \epsilon \right).
\end{equation}
As an example, consider~$f^*(x) = \sum_{u,v} g(x_u, x_v)$ for~$g \in \Hcal \otimes \Hcal$ of minimal norm.
The following table illustrates the obtained generalization bounds~$R(\hat f_n) - R(f^*)$ for KRR with various two-layer architectures ($\delta$: Dirac filter; $\mathbf{1}$: global average pooling):
\begin{center}
\begin{tabular}{|c|c|c|c|c|c|}
\hline
$h_1$ & $h_2$ & $|S_2|$ & $\|f^*\|_{K_2}$ & $\E_{x \sim \rho_X}[K_2(x,x)]$ & Bound ($\epsilon = 0, \tau^2_\rho = 1$) \\
\hline
$\delta$ & $\delta$ & $|\Omega|$ & $|\Omega| \|g\|$ & $|\Omega|^3 + \epsilon |\Omega|^3$ & $\|g\| |\Omega|^{2.5}/\sqrt{n}$ \\
$\delta$ & $\mathbf{1}$ & $|\Omega|$ & $|\Omega| \|g\|$ & $|\Omega|^2 +\epsilon |\Omega|^3$ & $\|g\| |\Omega|^2/\sqrt{n}$ \\
$\mathbf{1}$ & $\mathbf{1}$ & $|\Omega|$ & $\sqrt{|\Omega|} \|g\|$ & $|\Omega| + \epsilon |\Omega|^3$ & $\|g\| |\Omega|/\sqrt{n}$ \\
$\mathbf{1}$ & $\delta$ or $\mathbf{1}$ & $1$ & $\sqrt{|\Omega|} \|g\|$ & $|\Omega|^{-1} + \epsilon |\Omega|$ & $\|g\| / \sqrt{n}$ \\
\hline
\end{tabular}
\end{center}
\end{proposition}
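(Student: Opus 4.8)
The plan is to treat the covariance bound \eqref{eq:two_layer_variance} and the table separately; the table follows by specializing \eqref{eq:two_layer_variance} to each $(h_1,h_2)$, bounding $\|f^*\|_{\Hcal_{K_2}}$ through an explicit decomposition in Proposition~\ref{prop:two_layer_quad}, and substituting both into the generic KRR bound $\E[R(\hat f_n)-R(f^*)]\le C\|f^*\|_{\Hcal_{K_2}}\sqrt{\tau_\rho^2\,\E_{x\sim\rho_X}[K_2(x,x)]/n}$ of Proposition~\ref{prop:krr_bound}.

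For the covariance bound I would unroll $\Psi=A_2M_2P_2A_1M_1P_1$ on $x$: writing $\Phi_1(x)[s]=\sum_c h_1[s-c]\varphi_1(x_c)$ for the first pooled feature map, the output coordinate at spatial position $w$ and offset pair $(p,q)\in S_2^2$ equals $\sum_t h_2[w-t]\,\Phi_1(x)[t+p]\otimes\Phi_1(x)[t+q]$, so that $K_2(x,x)$ is a sum over $w$, $(p,q)$ and $(t,t')$ of $h_2[w-t]\,h_2[w-t']\,\langle\Phi_1(x)[t+p],\Phi_1(x)[t'+p]\rangle\,\langle\Phi_1(x)[t+q],\Phi_1(x)[t'+q]\rangle$. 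Expanding $\langle\Phi_1(x)[a],\Phi_1(x)[b]\rangle=\sum_{c,c'}h_1[a-c]\,h_1[b-c']\,k_1(x_c,x_{c'})$ and taking expectations, I would split the resulting quadruple sum over $(c,c',d,d')$ into its diagonal part ($c=c'$ and $d=d'$), where the hypothesis bounds the expectation by $1$, and the complement, where it bounds it by $\epsilon$. On the diagonal, $\sum_c h_1[a-c]h_1[b-c]=\langle h_1,L_{a-b}h_1\rangle$; since $a-b=t-t'$ for both the $p$- and the $q$-pair, this produces $\langle h_1,L_{t-t'}h_1\rangle^2$, and $\sum_{t,t'}h_2[w-t]h_2[w-t']\langle h_1,L_{t-t'}h_1\rangle^2=\sum_v\langle h_2,L_v h_2\rangle\langle h_1,L_v h_1\rangle^2$ after the substitution $v=t-t'$ and using symmetry of $h_2$. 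The complement contributes at most $\epsilon$ since $h_1,h_2\ge 0$ with $\|h_1\|_1=\|h_2\|_1=1$ make all the filter weights sum to one. Summing over $w$ (at most $|\Omega|$ terms) and over the $|S_2|^2$ offset pairs gives \eqref{eq:two_layer_variance}.

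For the table, each $\E_{x\sim\rho_X}[K_2(x,x)]$ entry follows from \eqref{eq:two_layer_variance} using $\langle h,L_v h\rangle=\1\{v=0\}$ for a Dirac filter and $\langle h,L_v h\rangle=1/|\Omega|$ for uniform global averaging. For $\|f^*\|_{\Hcal_{K_2}}$ with $f^*(x)=\sum_{u,v}g(x_u,x_v)$, I would exhibit a feasible family $(G_{pq})$ in Proposition~\ref{prop:two_layer_quad}: when $h_1=\mathbf 1$, the operators $A_1$ and $A_2$ restrict to the identity on constant signals, so taking $G_{00}$ to be the constant map equal to $g$ and $G_{pq}=0$ otherwise meets both constraints with penalty $|\Omega|\,\|g\|^2$, hence $\|f^*\|_{\Hcal_{K_2}}\le\sqrt{|\Omega|}\,\|g\|$; when $h_1=\delta$, the first constraint forces each $G_{pq}$ onto the line $u-v=q-p$, so I would cover the target with one pair per diagonal, $G_{0,r}[u,v]=\1\{u-v=r\}\,g$ for $r\in\Omega$, of total penalty $|\Omega|^2\|g\|^2$, hence $\|f^*\|_{\Hcal_{K_2}}\le|\Omega|\,\|g\|$. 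Plugging these into the KRR bound with $\tau_\rho^2=1$, $\epsilon=0$ yields the last column. The main obstacle is the bookkeeping in the covariance computation — matching the quadruple sum correctly against the ``expectation $\le 1$'' case of the hypothesis and collapsing the inner filter sums via the autocorrelations $\langle h,L_v h\rangle$ — together with verifying feasibility of the chosen $(G_{pq})$ against the two constraints of Proposition~\ref{prop:two_layer_quad}, i.e.\ computing the pseudo-inverses $(L_pA_1\otimes L_qA_1)^{\dagger*}$ and $A_2^{\dagger*}$ in the non-invertible, pooled regime.
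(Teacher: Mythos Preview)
Your proposal is correct and follows essentially the same route as the paper: expand $K_2(x,x)$ through the two pooling layers, split the quadruple patch sum into the diagonal ($c=c',d=d'$) and off-diagonal parts, collapse the filter sums to the autocorrelations $\langle h_1,L_v h_1\rangle^2$ and $\langle h_2,L_v h_2\rangle$, and bound $\|f^*\|_{\Hcal_{K_2}}$ via explicit $(G_{pq})$ decompositions (constant $G$ when $h_1=\mathbf 1$, one $G$ per diagonal when $h_1=\delta$). The only notational difference is that the paper indexes the Dirac-case decomposition by a single offset~$p$ rather than your pair $(0,r)$, and it avoids explicit pseudo-inverse computations by invoking the invariance of constant signals under $A^*$ when $\|h\|_1=1$, exactly as you anticipate.
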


The above result shows that the two-layer model allows for a much wider range of behaviors than the one-layer case, between approximation (through the norm~$\|f^*\|_{K_2}$) and estimation (through~$\E_{x\sim \rho_X}[K_2(x,x)]$), depending on the choice of architecture.
Choosing the right architecture may lead to large improvements in sample complexity when the target functions has a specific structure, for instance here by a factor up to~$|\Omega|^{2.5}$.
In Appendix~\ref{sec:generalization_models}, we discuss simple possible models where we may have a small~$\epsilon$.
Note that choosing filters that are less localized than Dirac impulses, but more than global average pooling, will again lead to different ``variance'' terms~\eqref{eq:two_layer_variance}, while providing more flexibility in terms of approximation compared to global pooling.
This result may be easily extended to higher-order polynomials at the second layer, by increasing the exponents on~$|S_2|$ and~$\langle h_1, L_v h_1 \rangle$ to the degree of the polynomial.
Other than the gains in sample complexity due to pooling, the bound also presents large gains compared to a ``fully-connected'' architecture, as in the one-layer case, since it only grows with the norm of a local interaction function in~$\Hcal \otimes \Hcal$ that depends on two patches, which may then be small even when this function has low smoothness.

\section{Numerical Experiments}
\label{sec:experiments}

In this section, we provide additional experiments illustrating numerical properties of the convolutional kernels considered in this paper.
We focus here on the Cifar10 dataset, and on CKN architectures based on the exponential kernel. Additional results are given in Appendix~\ref{sec:experiments_appx}.

\paragraph{Experimental setup on Cifar10.}
We consider classification on Cifar10 dataset, which consists of 50k training images and 10k test images with 10 different output categories.
We pre-process the images using a whitening/ZCA step at the patch level, which is commonly used for such kernels on images~\citep{mairal2016end,shankar2020neural,thiry2021unreasonable}.
This may help reduce the effective dimensionality of patches, and better align the dominant eigen directions to the target function, a property which may help kernel methods~\citep{ghorbani2020neural}.
Our convolutional kernel evaluation code is written in C++ and leverages the Eigen library for hardware-accelerated numerical computations.
The computation of kernel matrices is distributed on up to 1000 cores on a cluster consisting of Intel Xeon processors.
Computing the full Cifar10 kernel matrix typically takes around 10 hours when running on all 1000 cores.
Our results use kernel ridge regression in a one-versus-all approach, where each class uses labels $0.9$ for the correct label and $-0.1$ for the other labels.
We report the test accuracy for a fixed regularization parameter~$\lambda = 10^{-8}$ (we note that the performance typically remains the same for smaller values of~$\lambda$).
The exponential kernel always refers to~$\kappa(u) = e^{\frac{1}{\sigma^2}(u-1)}$ with~$\sigma = 0.6$.
Code is available at~\url{https://github.com/albietz/ckn_kernel}.

\begin{table}[tb]
\caption{Cifar10 test accuracy for two-layer architectures with larger second-layer patches, or three layer architectures. $\kappa$ denote the patch kernels used at each layer, `conv' the patch sizes, and `pool' the downsampling factors for Gaussian pooling filters.
We include the Myrtle10 convolutional kernel~\cite{shankar2020neural}, which consists of 10 layers including Exp kernels on 3x3 patches and 2x2 average pooling.}
\label{tab:ckn_comparison}

\centering

\begin{tabular}{|c|c|c|c|c|}
\hline
$\kappa$ & conv & pool & Test acc. (10k) & Test acc. (50k) \\
\hline
(Exp,Exp) & (3,5) & (2,5) & 81.1\% & 88.3\% \\
(Exp,Poly4) & (3,5) & (2,5) & 81.3\% & 88.3\% \\
(Exp,Poly3) & (3,5) & (2,5) & 81.1\% & 88.2\% \\
(Exp,Poly2) & (3,5) & (2,5) & 80.1\% & 87.4\% \\
\hline
(Exp,Exp,Exp) & (3,3,3) & (2,2,2) & 80.7\% & 88.2\% \\
(Exp,Poly2,Poly2) & (3,3,3) & (2,2,2) & 80.5\% & 87.9\% \\
\hline
Myrtle10~\cite{shankar2020neural} & - & - & - & 88.2\% \\
\hline
\end{tabular}
\end{table}

\paragraph{Varying the kernel architecture.}
Table~\ref{tab:ckn_comparison} shows test accuracies for different architectures compared to Table~\ref{tab:2layer_empirics}, including 3-layer models and 2-layer models with larger patches.
In both cases, the full models with exponential kernels outperform the 2-layer architecture of Table~\ref{tab:2layer_empirics}, and provide comparable accuracy to the Myrtle10 kernel of~\cite{shankar2020neural}, with an arguably simpler architecture.
We also see that using degree-3 or 4 polynomial kernels at the second second layer of the two-layer model essentially provides the same performance to the exponential kernel, and that degree-2 at the second and third layer of the 3-layer model only results in a 0.3\% accuracy drop.
The two-layer model with degree-2 at the second layer loses about 1\% accuracy, suggesting that certain Cifar10 images may require capturing interactions between at least 3 different patches in the image for good classification, though even with only second-order interactions, these models significantly outperform single-layer models.
While these results are encouraging, computing such kernels is prohibitively costly, and we found that applying the Nyström approach of~\cite{mairal2016end} to these kernels with more layers or larger patches requires larger models than for the architecture of Table~\ref{tab:2layer_empirics} for a similar accuracy.
Figure~\ref{fig:decays}(left) shows learning curves for different architectures, with slightly better convergence rates for more expressive models involving higher-order kernels or more layers; this suggests that their approximation properties may be better suited for these datasets.

\begin{figure}[tb]
	\centering
	\includegraphics[width=.3\columnwidth]{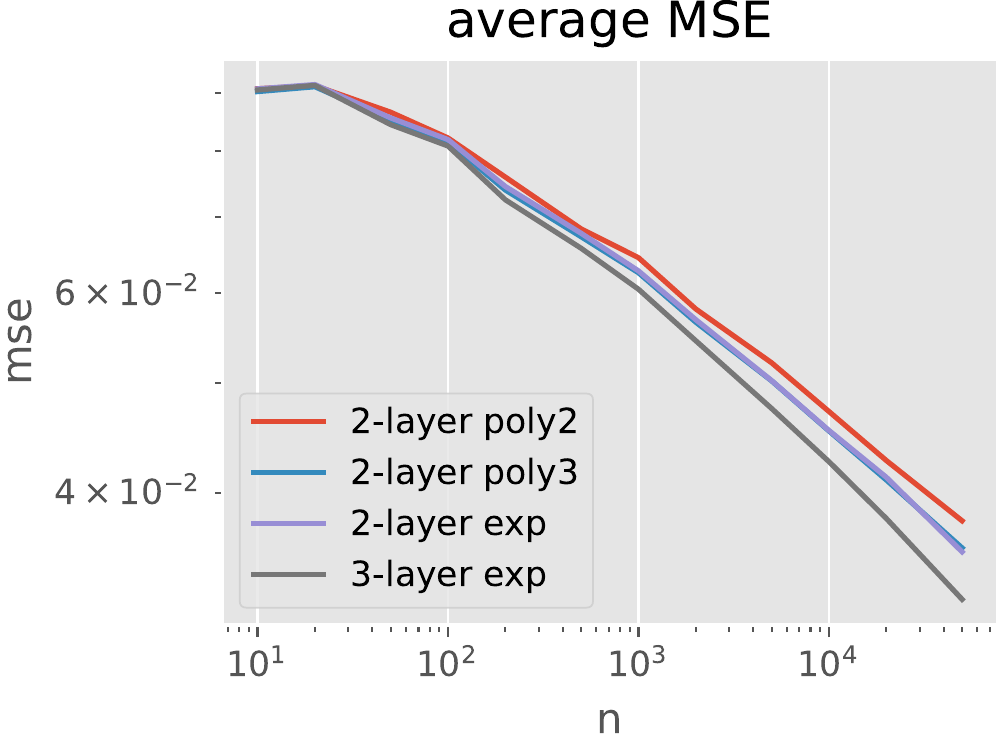}
	\includegraphics[width=.28\columnwidth]{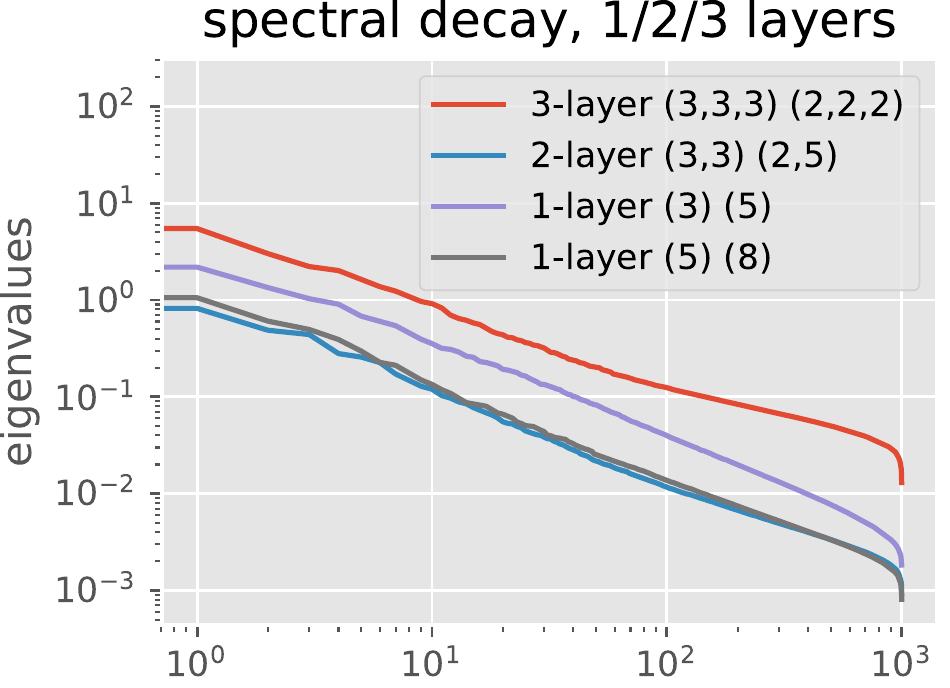}
	\includegraphics[width=.28\columnwidth]{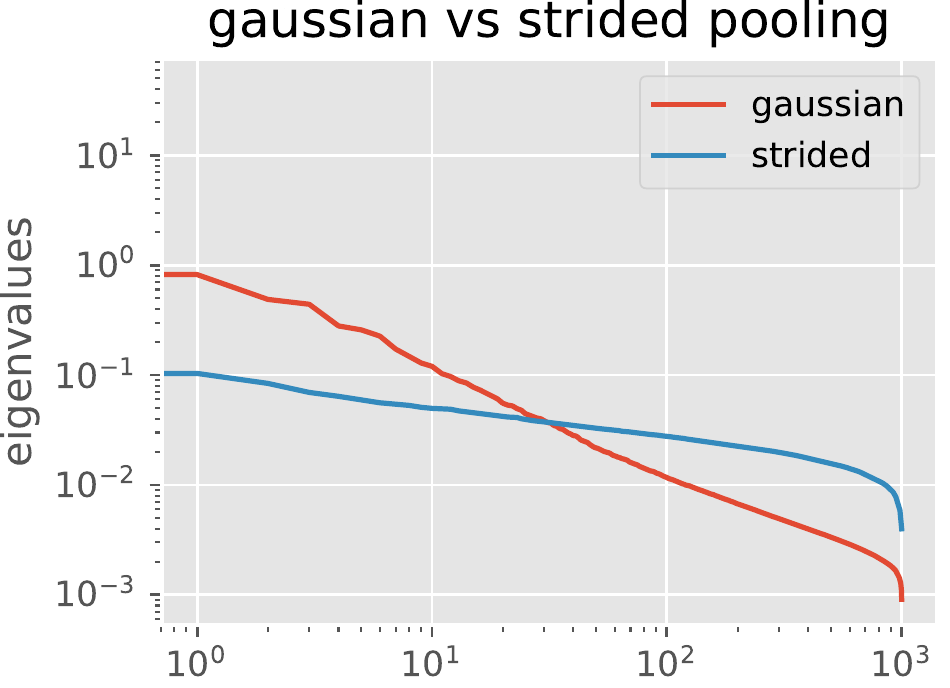}
	\caption{(left) Mean squared error of kernel ridge regression on Cifar10 for different kernels with 3x3 patches as a function of sample size (averaged over the 10 classes). (center) Eigenvalue decays of kernel matrices (with Exp kernels) on 1000 Cifar images, for different depths, patch sizes (3 or 5) and pooling sizes. (right) decays for 2-layer architecture, Gaussian pooling filter vs strided convolutions (\ie, no pooling).
	The plots illustrate that pooling is essential for reducing effective dimensionality.}
	\label{fig:decays}
	\vspace{-0.2cm}
\end{figure}

\paragraph{Role of pooling.}
Figure~\ref{fig:decays} shows the spectral decays of the empirical kernel matrix on 1000 Cifar images, which may help assess the ``effective dimensionality'' of the data, and are related to generalization properties~\citep{caponnetto2007optimal}. While multi-layer architectures with pooling seem to provide comparable decays for various depths, removing pooling leads to significantly slower decays, and hence much larger RKHSs.
In particular, the ``strided pooling'' architecture (\ie, with Dirac pooling filters and downsampling) shown in Figure~\ref{fig:decays}(right), which resembles the kernel considered in~\citep{scetbon2020harmonic}, obtains less than~40\% accuracy on 10k examples.
This suggests that the regularization properties induced by pooling, studied in Section~\ref{sec:approximation}, are crucial for efficient learning on these problems, as shown in Section~\ref{sec:generalization}.
Appendix~\ref{sec:experiments_appx} provides more empirics on different pooling configurations.

\section{Discussion and Concluding Remarks}
\label{sec:discussion}

In this paper, we studied approximation and generalization properties of convolutional kernels, showing how multi-layer models with convolutional architectures may effectively break the curse of dimensionality on problems where the input consists of high-dimensional natural signals, by modeling localized functions on patches and interactions thereof.
We also show how pooling induces additional smoothness constraints on how interaction terms may or may not vary with global and relative spatial locations.
An important question for future work is how optimization of deep convolutional networks may further improve approximation properties compared to what is captured by the kernel regime presented here, for instance by selecting well-chosen convolution filters at the first layer, or interaction patterns in subsequent layers, perhaps in a hierarchical manner.

\subsubsection*{Acknowledgments}
The author would like to thank Francis Bach, Alessandro Rudi, Joan Bruna, and Julien Mairal for helpful discussions.

\bibliography{full,bibli}
\bibliographystyle{iclr2022_conference}

\appendix

\section{Further Background}
\label{sec:background_appx}

This section provides further background on the problem of approximation of functions defined on signals, as well as on the kernels considered in the paper.
We begin by introducing and motivating the problem of learning functions defined on signals such as images, which captures tasks such as image classification where deep convolutional networks are predominant.
We then recall properties of dot-product kernels and kernel tensor products, which are key to our study of approximation.

\subsection{Natural Signals and Curse of Dimensionality} 
\label{sub:natural_signals_and_curse_of_dimensionality}

We consider learning problems consisting of labeled examples~$(x, y) \sim \rho$ from a data distribution~$\rho$, where~$x$ is a discrete signal~$x[u]$ with~$u \in \Omega$ denoting the position (\eg, pixel location in an image) in a domain~$\Omega$,~$x[u] \in \R^p$ (\eg, $p = 3$ for RGB pixels), and~$y \in \R$ is a target label.
In a non-parametric setup, statistical learning may be framed as trying to approximate the regression function
\begin{equation*}
f^*(x) = \E_\rho[y | x]
\end{equation*}
using samples from the data distribution~$\rho$.
If~$f^*$ is only assumed to be Lipschitz, learning requires a number of samples that scales exponentially in the dimension (see, \eg,~\cite{luxburg2004distance,wainwright2019high}), a phenomenon known as the \emph{curse of dimensionality}.
In the case of natural signals, the dimension~$d = p|\Omega|$ scales with the size of the domain~$|\Omega|$ (\eg, the number of pixels), which is typically very large and thus makes this intractable.
One common way to alleviate this is to assume that~$f^*$ is smooth, however the order of smoothness typically needs to be of the order of the dimension in order for the problem to become tractable, which is a very strong assumption here when~$d$ is very large.
This highlights the need for more structured assumptions on~$f^*$ which may help overcome the curse of dimensionality.

\paragraph{Insufficiency of invariance and stability.}
Two geometric properties that have been successful for studying the benefits of convolutional architectures are (near-)translation invariance and stability to deformations.
Various works have shown that certain convolutional models~$f$ yield good invariance and stability~\cite{mallat2012group,bruna2013invariant,bietti2019group}, in the sense that when~$\tilde x$ is a translation or a small deformation of~$x$, then~$|f(\tilde x) - f(x)|$ is small.
Nevertheless, one can show that for band-limited signals (such as discrete signals), $\|\tilde x - x\|_2$ can be controlled in a similar way (though with worse constants, see~\cite[Proposition 5]{wiatowski2018mathematical}), so that Lipschitz functions on such signals obey such stability properties.
Thus, deformation stability is not a much stronger assumption than Lipschitzness, and is insufficient by itself to escape the curse of dimensionality.

\paragraph{Spatial localization.}
One successful strategy for learning image recognition models which predates deep learning is to rely on simple aggregations of local features.
These may be extracted using hand-crafted procedures~\citep{lowe1999object,sanchez2013image,jegou2011aggregating}, or using learned feature extractors, either through learned filters in the early layers of a CNN~\citep{zeiler2014visualizing}, or other procedures (\eg,~\cite{thiry2021unreasonable}).
One simplified example that encodes such a prior is if the target function~$f^*$ only depends on the input image through a localized part of the input such as a patch~$x_u = (x[u + v])_{v \in S} \in \R^{p|S|}$, where~$S$ is a small box centered around~$0$, that is, $f^*(x) = g^*(x_u)$.
Then, if~$g^*$ is assumed to be Lipschitz, we would like a sample complexity that only scales exponentially in the dimension of a patch~$p|S|$, which is much smaller than dimension of the entire image~$p|\Omega|$.
This is indeed the case if we use a kernel defined on such patches, such as
\begin{equation*}
K(x, x') = \sum_u k(x_u, x'_u),
\end{equation*}
where~$k$ is a ``simple'' kernel such as a dot-product kernel, as discussed in Appendix~\ref{sec:localization_appx}.
In contrast, if~$K$ is a dot-product kernel on the entire image, corresponding to an infinite-width limit of a fully-connected network, then approximation is more difficult and is generally cursed by the full dimension (see Appendix~\ref{sec:localization_appx}).
While some models of wide fully-connected networks provide some adaptivity to low-dimensional structures such as the variables in a patch~\citep{bach2017breaking}, no tractable algorithms are currently known to achieve such behavior provably, and it is reasonable to instead encode such prior information in a convolutional architecture.

\paragraph{Modeling interactions.}
Modeling interactions between elements of a system at different scales, possibly hierarchically, is important in physics and complex systems, in order to efficiently handle systems with large numbers of variables~\citep{beylkin2002numerical,hackbusch2009new}.
As an example, one may consider target functions~$f^*(x)$ that consist of interaction functions of the form~$g(x_{p}, x_q)$, where~$p, q$ denote locations of the corresponding patches, and higher-order interactions may also be considered.
In the context of image recognition, while functions of a single patch may capture local texture information such as edges or color, such an interaction function may also respond to specific spatial configurations of relevant patches, which could perhaps help identify properties related to the ``shape'' of an object, for instance.
If such functions~$g$ are too general, then the curse of dimensionality may kick in again when one considers more than a handful of patches.
Certain idealized models of approximation may model such interactions more efficiently through hierarchical compositions (\eg,~\cite{poggio2017and}) or tensor decompositions~\citep{cohen2016convolutional,cohen2017inductive}, though no tractable algorithms are known to find such models.
In this work, we tackle this in a tractable way using multi-layer convolutional kernels. We show that they can model interactions through kernel tensor products, which define functional spaces that are typically much smaller and more structured than for a generic kernel on the full vector~$(x_p, x_q)$.

\subsection{Dot-Product Kernels and their Tensor Products}
\label{sub:dp_kernels_tp}

In this section, we review some properties of dot-product kernels, their induced RKHS and regularization properties.
We then recall the notion of tensor product of kernels, which allows us to describe the RKHS of products of kernels in terms of that of individual kernels.

\paragraph{Dot-product kernels.}
The rotation-invariance of dot-product kernels provides a natural description of their RKHS in terms of harmonic decompositions of functions on the sphere using spherical harmonics~\citep{smola2001regularization,bach2017breaking}.
This leads to natural connections with regularity properties of functions defined on the sphere.
For instance, if the kernel integral operator on~$L^2(\Sbb^{d-1})$ has a polynomially decaying spectral decay, as is the case for kernels arising from the ReLU activation~\citep{bach2017breaking,bietti2019inductive}, then the RKHS contains functions~$g \in L^2(\Sbb^{d-1})$ with an RKHS norm equivalent to
\begin{equation}
\label{eq:sobolev_norm}
\|\Delta_{\Sbb^{d-1}}^{\beta/2} g\|_{L^2(\Sbb^{d-1})},
\end{equation}
for some~$\beta$ that depends on the decay exponent and must be larger than~$(d-1)/2$, with~$\Delta_{\Sbb^{d-1}}$ the Laplace-Beltrami operator on the sphere.
This resembles a Sobolev norm of order~$\beta$, and the RKHS contains functions with bounded derivatives up to order~$\beta$.
When~$d$ is small (\eg, at the first layer with small images patches), the space contains functions that need not be too regular, and may thus be quite discriminative, while for large~$d$ (\eg, for a fully-connected network), the functions must be highly smooth in order to be in the RKHS, and large norms are necessary to approach non-smooth functions.
For kernels with decays faster than polynomial, such as the Gaussian kernel, the RKHS contains smooth functions, but may still provide good approximation to non-smooth functions, particularly with small~$d$ and when using small bandwidth parameters.
The homogeneous case~\eqref{eq:patch_kernel} leads to functions~$f(x) = \|x\| g(\frac{x}{\|x\|})$ with~$g$ defined on the sphere, with a norm given by the same penalty~\eqref{eq:sobolev_norm} on the function~$g$~\citep{bietti2019inductive}.

\paragraph{Kernel tensor products.}
For more than one layer, the convolutional kernels we study in Section~\ref{sec:approximation} can be expressed in terms of products of kernels on patches, of the form
\begin{equation}
\label{eq:tpkernel}
K((x_1, \ldots, x_m), (x_1', \ldots, x_m')) = \prod_{j=1}^m k(x_j, x_j'),
\end{equation}
where~$x_1, \ldots, x_m, x'_1, \ldots, x'_m \in \R^d$ are patches which may come from different signal locations.
If~$\varphi: \R^d \to \Hcal$ is the feature map into the RKHS~$\Hcal$ of~$k$, then
\[
\psi(x_1, \ldots, x_m) = \varphi(x_1) \otimes \cdots \otimes \varphi(x_m)
\]
is a feature map for~$K$, and the corresponding RKHS, denoted~$\Hcal^{\otimes m} = \Hcal \otimes \cdots \otimes \Hcal$, contains all functions
\begin{equation*}
f(x_1, \ldots, x_m) = \sum_{i=1}^n g_{i,1}(x_1) \ldots g_{i,m}(x_m),
\end{equation*}
for some~$n$, with~$g_{i,m} \in \Hcal$ for~$i \in [n]$ and~$j \in [m]$ (see,\eg,~\cite[Section 12.4.2]{wainwright2019high} for a precise construction).
The resulting RKHS is often much smaller than for a more generic kernel on~$\R^{d\times m}$; for instance, if~$\Hcal$ is a Sobolev space of order~$\beta$ in dimension~$d$, then~$\Hcal^{\otimes m}$ is much smaller than the Sobolev space of order~$\beta$ in~$d\times m$ dimensions, and corresponds to stronger, mixed regularity conditions~\citep[see, \eg,][for the $d\!=\!1$ case]{bach2017equivalence,sickel2009tensor}.
This can yield improved generalization properties if the target function has such a structure~\citep{lin2000tensor}.
Kernels of the form~\eqref{eq:tpkernel} and sums of such kernels have been useful tools for avoiding the curse of dimensionality by encoding interactions between variables that are relevant to the problem at hand~\cite[Chapter 10]{wahba1990spline}.
In what follows, we show how patch extraction and pooling operations shape the properties of such interactions between patches in convolutional kernels through additional spatial~regularities.

\section{Additional Experiments}
\label{sec:experiments_appx}

In this section, we provide additional experiments to those presented in Section~\ref{sec:experiments}, using different patch kernels, patch sizes, pooling filters, preprocessings, and datasets.

\paragraph{Three-layer architectures with different patch kernels.}
Table~\ref{tab:3layers} provides more results on 3-layer architectures compared to Table~\ref{tab:ckn_comparison}, including different changes in the degrees of polynomial kernels at the second and third layer.
In particular we see that the architecture with degree-2 kernels at both layers, which captures interactions of order 4, also outperforms the simpler ones using degree-4 kernels at either layer, suggesting that a deeper architecture may better model relevant interactions terms on this problem.

\begin{table}
\caption{Cifar10 test accuracy with 3-layer convolutional kernels with 3x3 patches and pooling/downsampling sizes [2,2,2], with different choices of patch kernels~$\kappa_1$,~$\kappa_2$ and~$\kappa_3$. The last model is similar to a 1-layer convolutional kernel. Due to high computational cost, we use 10k training images instead of the full training set (50k images) in most cases.}
\label{tab:3layers}
\vspace{0.1cm}

\centering
\begin{tabular}{|c|c|c|c|c|}
\hline
$\kappa_1$ & $\kappa_2$ & $\kappa_3$ & Test ac. (10k) & Test ac. (50k) \\
\hline
Exp & Exp & Exp & 80.7\% & \textbf{88.2\%} \\
Exp & Poly2 & Poly2 & 80.5\% & 87.9\% \\
Exp & Poly4 & Lin & 80.2\% & - \\
Exp & Lin & Poly4 & 79.2\% & - \\
Exp & Lin & Lin & 74.1\% & - \\
\hline
\end{tabular}
\end{table}

\paragraph{Varying the second layer patch size.}
Table~\ref{tab:2layers_varypatch} shows the variations in test performance when changing the size of the second patches at the second layer.
We see that intermediate sizes between 3x3 and 9x9 work best, but that performance degrades when using patches that are too large or too small.
For very large patches, this may be due to the large variance in~\eqref{eq:two_layer_variance}, or perhaps instability~\citep{bietti2019group}.
For~$|S_2| = $1x1, note that while pooling after the first layer allows even 1x1 patches to capture interactions across different input image patches, these may be limited to short range interactions when the pooling filter is localized (see Proposition~\ref{prop:two_layer_quad}), which may limit the expressivity of the model.

\begin{table}
\caption{Cifar10 test accuracy on 10k examples with 2-layer convolutional kernels with 3x3 patches at the first layer, pooling/downsampling sizes [2,5] and patch kernels [Exp,Poly2], with different patch sizes at the second layer.}
\label{tab:2layers_varypatch}
\vspace{0.1cm}

\centering
\begin{tabular}{|c|c|c|c|c|c|c|}

\hline
$|S_2|$ & 1x1 & 3x3 & 5x5 & 7x7 & 9x9 & 11x11 \\
\hline
Test acc. (10k) & 76.3\% & 79.4\% & 80.1\% & 80.1\% & 80.1\% & 79.9\% \\
\hline
\end{tabular}
\end{table}

\begin{table}[h]
\caption{Cifar10 test accuracy with patch kernels that are either arc-cosine kernels (denoted ReLU) or polynomial kernels.
The 2-layer architectures use 3x3 patches and [2,5] downsampling/pooling as in Table~\ref{tab:2layer_empirics}.
``ReLU-NTK'' indicates that we consider the neural tangent kernel for a ReLU network with similar architecture, instead of the conjugate kernel.
}
\label{tab:arccos_kernel}
\vspace{0.1cm}

\centering
\begin{tabular}{|c|c|c|c|c|}
\hline
$\kappa_1$ & $\kappa_2$ & Test ac. (10k) & Test ac. (50k) \\
\hline
ReLU & ReLU & 78.5\% & 86.6\% \\
ReLU-NTK & ReLU-NTK & 79.2\% & 87.2\% \\
ReLU & Poly2 & 77.2\% & - \\
ReLU & Lin & 71.5\% & - \\
\hline
\end{tabular}
\end{table}

\paragraph{Arc-cosine kernel.}
In Table~\ref{tab:arccos_kernel}, we consider 2-layer convolutional kernels with a similar architecture to those considered in Table~\ref{tab:2layer_empirics}, but where we use arc-cosine kernels arising from ReLU activations instead of the exponential kernel used in Section~\ref{sec:experiments}, given by
\begin{equation*}
\kappa(u) = \frac{1}{\pi}(u \cdot (\pi - \arccos(u)) + \sqrt{1 - u^2}).
\end{equation*}
The obtained convolutional kernel then corresponds to the conjugate kernel or NNGP kernel arising from an infinite-width convolutional network with the ReLU activation~\citep{daniely2016toward,garriga2019deep,novak2019bayesian}.
We may also consider the neural tangent kernel (NTK) for the same architecture, which additionally involves arc-cosine kernels of degree 0, which correspond to random feature kernels for step activations~$u \mapsto \1\{u \geq 0\}$.
We find that the NTK performs slightly better than the conjugate kernel, but both kernels achieve lower accuracy compared to the Exponential kernel shown in Table~\ref{tab:2layer_empirics}.
Nevertheless, we observe a similar pattern regarding the use of polynomial kernels at the second layer, namely, the drop in accuracy is much smaller when using a quadratic kernel compared to a linear kernel, suggesting that non-linear kernels on top of the first layer, and the interactions they may capture, are crucial on this dataset for good accuracy.

\begin{table}[h]
\caption{Cifar10 test accuracy for one-layer architectures with larger patches of size 6x6, exponential kernels, and different downsampling/pooling sizes (using Gaussian pooling filters with bandwidth and size of filters proportional to the downsampling factor).
The results are for 10k training samples.
}
\label{tab:one_layer_pool}
\vspace{0.1cm}

\centering
\begin{tabular}{|l|c|c|c|c|c|}
\hline
Pooling & 2 & 4 & 6 & 8 & 10 \\
\hline
Test acc. (10k) & 67.6\% & 73.3\% & 75.5\% & 75.8\% & 75.5\% \\
\hline
\end{tabular}
\end{table}

\paragraph{One-layer architectures and larger initial patches.}
Table~\ref{tab:one_layer_pool} shows the accuracy for one-layer convolutional kernels with 6x6 patches\footnote{Note that in this case the ZCA/whitening step is applied on these larger 6x6 patches.} and various pooling sizes, with a highest accuracy of~75.8\% for a pooling size of~$8$.
While this improves on the accuracy obtained with 3x3 patches (slightly above~74\% for the architectures in Tables~\ref{tab:2layer_empirics} and~\ref{tab:3layers} with a single non-linear kernel at the first layer),
these accuracies remain much lower than those achieved by two-layer architectures with even quadratic kernels at the second layer.
While using larger patches may allow capturing patterns that are less localized compared to small 3x3 patches, the neighborhoods that they model need to remain small in order to avoid the curse of dimensionality when using dot-product kernels, as discussed in Section~\ref{sec:background}.
Instead, the multi-layer architecture may model information at larger scales with a much milder dependence on the size of the neighborhood, thanks to the structure imposed by tensor product kernels (see Section~\ref{sub:dp_kernels_tp}) and the additional regularities induced by pooling.

We also found that larger patches at the first layer may hurt performance in multi-layer models: when considering the architecture of Table~\ref{tab:2layer_empirics} with exponential kernels, using 5x5 patches instead of 3x3 at the first layer yields an accuracy of 79.6\% instead of 80.5\% on Cifar10 when training on the same 10k images.
This again reflects the benefits of using small patches at the first layer for allowing better approximation on small neighborhoods, while modeling larger scales using interaction models according to the structure of the architecture.
We note nevertheless that for standard deep networks, larger patches are often used at the first layer~\citep[\eg,][]{he2016deep}, as the feature selection capabilities of SGD may alleviate the dependence on dimension, \eg, by finding Gabor-like filters.

\paragraph{Gaussian vs average pooling.}
Table~\ref{tab:pool_comp} shows the differences in performance between two or three layer architectures considered in Table~\ref{tab:ckn_comparison}, when Gaussian pooling filters are replaced by average pooling filters.
For both architectures considered, average pooling leads to a significant performance drop. This suggests that one may need deeper architectures in order for such average pooling filters to work well, as in~\citep{shankar2020neural}, either with multiple 3x3 convolutional layers before applying pooling, or by applying multiple average pooling layers in a row as in certain Myrtle kernels.
Note that iterating multiple average pooling layers in a row is equivalent to using a larger and more smooth pooling filter (with one more order of smoothness at each layer), which may then be more comparable to our Gaussian pooling filters.

\begin{table}[h]
\caption{Gaussian vs average pooling for two models from Table~\ref{tab:ckn_comparison}.}
\label{tab:pool_comp}
\centering
\begin{tabular}{|c|c|c|}
\hline
Model & Gaussian & Average \\
\hline
$\kappa$: (Exp,Exp), conv: (3,5), pool: (2,5) & 88.3\% & 75.9\% \\
$\kappa$: (Exp,Exp,Exp), conv: (3,3,3), pool: (2,2,2) & 88.2\% & 72.4\% \\
\hline
\end{tabular}
\end{table}

\begin{table}[h]
\caption{SVHN test accuracy for a two-layer convolutional kernel network with Nyström approximation~\citep{mairal2016end} with patch size 3x3, pooling sizes [2,5], and filters [256, 4096].
}
\label{tab:svhn}
\vspace{0.1cm}

\centering
\begin{tabular}{|c|c|c|}
\hline
$\kappa_1$ & $\kappa_2$ & Test acc. (full with Nyström) \\
\hline
Exp & Exp & 89.5\% \\
Exp & Poly3 & 89.3\% \\
Exp & Poly2 & 88.6\% \\
Poly2 & Exp & 87.1\% \\
Poly2 & Poly2 & 86.6\% \\
Exp & Lin & 78.5\% \\
\hline
\end{tabular}
\end{table}

\paragraph{SVHN dataset.}
We now consider the SVHN dataset, which consists of 32x32 images of digits from Google Street View images, 73\,257 for training and 26\,032 for testing.
Due to the larger dataset size, we only consider the kernel approximation approach of~\cite{mairal2016end} based on the Nyström method, which projects the patch kernel feature maps at each layer to finite-dimensional subspaces generated by a set of anchor points (playing the role of convolutional filters), themselves computed via a K-means clustering of patches.\footnote{We use the PyTorch implementation available at~\url{https://github.com/claying/CKN-Pytorch-image}.}
We train one-versus-all classifiers on the resulting finite-dimensional representations using regularized ERM with the squared hinge loss, and simply report the best test accuracy over a logarithmic grid of choices for the regularization parameter, ignoring model selection issues in order to assess approximation properties.
We use the same ZCA preprocessing as on Cifar10 and the same architecture as in Table~\ref{tab:2layer_empirics}, with a relatively small number of filters (256 at the first layer, 4096 at the second layer, leading to representations of dimension 65\,536), noting that the accuracy can further improve when increasing this number.
Our observations are similar to those for the Cifar10 dataset: using a degree-3 polynomial kernel at the second layer reaches very similar accuracy to the exponential kernel; using a degree-2 polynomial leads to a slight drop, but a smaller drop than when making this same change at the first layer; using a linear kernel at the second layer leads to a much larger drop.
This again highlights the importance of using non-linear kernels on top of the first layer in order to capture interactions at larger scales than the scale of a single patch.

\paragraph{Local versus global whitening.}
Recall that our pre-processing is based on a patch-level whitening or ZCA on each image, following~\cite{mairal2016end}.
In practice, this is achieved by whitening extracted patches from each image, and reconstructing the image from whitened patches via averaging.
In contrast, other approaches use global whitening of the entire image~\cite{lee2020finite,shankar2020neural}.
For the 2-layer model shown in Table~\ref{tab:ckn_comparison} with 5x5 patches at the second layer, we found global ZCA to provide significantly worse performance, with a drop from 88.3\% to about 80\%.

\paragraph{Finite networks and comparison to~\cite{shankar2020neural}.}
The work~\cite{shankar2020neural} introduces Myrtle kernels but also consider similar architectures for usual CNNs with finite-width, trained with stochastic gradient descent.
Obtaining competitive architectures for the finite-width case is not the goal of our work, which focuses on good architectures for the kernel setup, yet it remains interesting to consider this question.
In the case of~\cite{shankar2020neural}, training the finite-width networks yields better accuracy compared to their ``infinite-width'' kernel counterparts, a commonly observed phenomenon which may be due to better ``adaptivity'' of optimization algorithms compared to kernel methods, which have a fixed representation and thus may not learn representations adapted to the data (see, \eg,~\citealp{allen2020backward,bach2017breaking,chizat2018note}).
Nevertheless, we found that for the two-layer architecture considered in Table~\ref{tab:2layer_empirics}, which has many fewer layers compared to the Myrtle architectures of~\cite{shankar2020neural}, using a finite-width ReLU network yields poorer performance compared to the kernel (around 83\% at best, compared to 87.9\%).
This may suggest that for convolutional networks, deeper networks may have additional advantages when using optimization algorithms, in terms of adapting to possibly relevant structure of the problem, such as hierarchical representations (see, \eg,~\cite{allen2020backward,chen2020towards,poggio2017and} for theoretical justifications of the benefits of depth in non-kernel regimes).

\section{Complexity of Spatially Localized Functions}
\label{sec:localization_appx}

In this section, we briefly elaborate on our discussion in Section~\ref{sub:natural_signals_and_curse_of_dimensionality} on how simple convolutional structure may improve complexity when target functions are spatially localized.
We assume~$f^*(x) = g^*(x_u)$ with~$x_u = (x[u+v])_{v \in S} \in \R^{p|S|}$ a patch of size~$|S|$, where~$g^*$ is a Lipschitz function.

If we define the kernel~$K_u(x, x') = k(x_u, x'_u)$, where~$k$ is a dot-product kernel arising from a one-hidden layer network with positively-homogeneous activation such as the ReLU, and further assume patches to be bounded and~$g^*$ to be bounded, then the uniform approximation error bound of~\citet[Proposition 6]{bach2017breaking} together with a simple~$O(1/\sqrt{n})$ Rademacher complexity bound on estimation error shows that we may achieve a generalization bound with a rate that only depends on the patch dimension~$p|S|$ rather than~$p|\Omega|$ in this setup (\ie, a sample complexity that is exponential in~$p|S|$, which is much smaller than $p|\Omega|$).

If we consider the kernel~$K(x, x') = \sum_{u \in \Omega} k(x_u, x'_u)$, the RKHS contains all functions in the RKHS of~$K_u$ for all~$u \in \Omega$, with the same norm (this may be seen as an application of Theorem~\ref{thm:rkhs_explicit_map} with a feature map given by concatenating the kernel maps of each~$K_u$), so that we may achieve the same approximation error as above, and thus a similar generalization bound that is not cursed by dimension.
This kernel also allows us to obtain similar generalization guarantees when~$f^*$ consists of linear combinations of such spatially localized functions on different patches within the image.

In contrast, when using a similar dot-product kernel on the full signal, corresponding to using a fully-connected network in a kernel regime, one may construct functions~$f^*(x) = g^*(x_u)$ with~$g^*$ Lipschitz where an RKHS norm that is exponentially large in the (full) dimension~$p|\Omega|$ is needed for a small approximation error (see~\citealp[Appendix D.5]{bach2017breaking}).

Related to this,~\cite{malach2020computational} show a separation in the different setting of learning certain parity functions on the hypercube using gradient methods; their upper bound for convolutional networks is based on a similar kernel regime as above.
We note that kernels that exploit such a localized structure have also been considered in the context of structured prediction for improved statistical guarantees~\citep{ciliberto2019localized}.

\section{Extensions to More Layers}
\label{sec:extensions_appx}

In this section, we study the RKHS for convolutional kernels with more than 2 convolutional layers, by considering the simple example of a 3-layer convolutional kernel~$K_3$ defined by the feature map
\begin{equation*}
\Psi(x) = A_3 M_3 P_3 A_2 M_2 P_2 A_1 M_1 P_1 x,
\end{equation*}
with quadratic kernels at the second and third layer, \ie, $k_2(z, z') = (\langle z, z' \rangle)^2$ and~$k_3(z, z') = (\langle z, z' \rangle)^2$.
By isomorphism, we may consider the sequence of Hilbert spaces~$\Hcal_\ell$ to be~$\Hcal_1 = \Hcal$, $\Hcal_2 = (\Hcal \otimes \Hcal)^{|S_2| \times |S_2|}$, and~$\Hcal_3 = (\Hcal^{\otimes 4})^{(|S_3| \times |S_2| \times |S_2|)^2}$.
For some domain~$\Omega$, we define the operators~$\diag_2:L^2(\Omega^4) \to L^2(\Omega^2)$ and its adjoint~$\diag_2:L^2(\Omega^2) \to L^2(\Omega^4)$ by
\begin{align*}
\diag_{2}(M)[u,v] &= M[u,u,v,v] \quad \text{ for }M \in L^2(\Omega^4) \\
\diag_2(M)[u_1,u_2,u_3,u_4] &= \1\{u_1 = u_2\} \1\{u_3 = u_4\} M[u_1,u_3] \quad \text{ for }M \in L^2(\Omega^2).
\end{align*}
We may then describe the RKHS as follows.

\begin{proposition}[RKHS of 3-layer CKN with quadratic~$k_{2/3}$]
\label{prop:three_layer_quad}
The RKHS of~$K_3$ when~$k_2$ and $k_3$ are quadratic kernels~$(\langle \cdot, \cdot \rangle)^2$ consists of functions of the form
\begin{equation}
\label{eq:f_decomp_three_layers}
f(x) = \sum_{\alpha \in (S_3 \times S_2 \times S_2)^2} \sum_{u_1, u_2, u_3, u_v \in \Omega} G_\alpha[u_1, u_2, u_3, u_4](x_{u_1}, x_{u_2}, x_{u_3}, x_{u_4}),
\end{equation}
where~$G_\alpha \in L^2(\Omega^4, \Hcal^{\otimes 4})$ obeys the constraint
\begin{equation}
\label{eq:3_layer_constraint}
G_\alpha \in \Range(E_\alpha),
\end{equation}
where the linear operator~$E_\alpha : L^2(\Omega_3) \to L^2(\Omega^4)$ for~$\alpha = (p,q,r,p',q',r')$ (with~$p,p' \in S_3$ and~$q, r, q', r' \in S_2$) is defined by
\[
E_\alpha x = A_{1,\alpha}^* \diag_2(A_{2,\alpha}^* \diag(A_3^* x)).
\]
The operators~$A_{1,\alpha}$ and~$A_{2,\alpha}$ denote:
\begin{align*}
A_{1,\alpha} &= L_q A_1 \otimes L_r A_1 \otimes L_{q'} A_1 \otimes L_{r'} A_1 \\
A_{2,\alpha} &= L_p A_2 \otimes L_{p'} A_2.
\end{align*}

The squared RKHS norm~$\|f\|_{\Hcal_{K_3}}^2$ is then equal to the minimum over decompositions~$\eqref{eq:f_decomp_three_layers}$ of the quantity
\begin{equation}
\label{eq:3layer_penalty}
\sum_{\alpha} \|A_3^{\dagger *} \diag(A_{2,\alpha}^{\dagger *} \diag_{2}(A_{1,\alpha}^{\dagger *} G_{\alpha}))\|^2_{L^2(\Omega_3)}.
\end{equation}
\end{proposition}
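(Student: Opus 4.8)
The plan is to mimic the two-layer argument of Proposition~\ref{prop:two_layer_quad} one level deeper, exploiting the fact that the only new operation is one more patch-extraction/quadratic-kernel/pooling block. First I would unfold the feature map $\Psi(x) = A_3 M_3 P_3 A_2 M_2 P_2 A_1 M_1 P_1 x$ from the outside in. A function in $\Hcal_{K_3}$ is $f(x) = \langle W, \Psi(x)\rangle_{L^2(\Omega_3,\Hcal_3)}$ for $W \in \Range(\Psi)$-closure, and by the standard RKHS characterization (Theorem~\ref{thm:rkhs_explicit_map}) its norm is $\inf \|W\|$ over all $W$ representing $f$. Peeling off $A_3$: $\langle W, A_3 M_3 P_3 (\cdot)\rangle = \langle A_3^* W, M_3 P_3(\cdot)\rangle$, so the ``effective'' linear functional lives in $L^2(\Omega_2', \Hcal_3)$ with the pooling constraint $A_3^* W$-type factor showing up exactly as in the one-layer case (hence the outermost $\diag(A_3^*\cdot)$ and, in the penalty, $A_3^{\dagger*}$). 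Peeling off $M_3P_3$ with $k_3(z,z')=\langle z,z'\rangle^2$ and feature map $z\mapsto z\otimes z$: the quadratic kernel turns a bilinear-in-$\varphi_3(P_3 x[u])$ functional into a sum over pairs $(p,p')\in S_3\times S_3$ of terms bilinear in the $M_2P_2A_1\cdots$ features at shifted locations, which is precisely where the $L_pA_2\otimes L_{p'}A_2 = A_{2,\alpha}$ operators and the $\diag_2$-versus-$\diag$ bookkeeping enter. Repeating once more for $M_2P_2$ with its quadratic kernel produces the fourfold tensor structure $\Hcal^{\otimes 4}$, the four shifts $q,r,q',r'\in S_2$, and the operator $A_{1,\alpha}=L_qA_1\otimes L_rA_1\otimes L_{q'}A_1\otimes L_{r'}A_1$.

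The key steps, in order: (i) establish the abstract template — for a composite feature map $A\circ (\text{quadratic }M)\circ P$, characterize the RKHS as sums over pairs of shifts of ``$\diag$ of a pooled-and-constrained'' tensor, recording both the range constraint and the norm penalty; this is essentially Proposition~\ref{prop:two_layer_quad} restated as a reusable lemma. (ii) Apply the template at layer 3, getting objects indexed by $(p,p')$ living in $\Hcal_2\otimes\Hcal_2$ with a $\diag$/$A_3^{\dagger*}$ wrapper. (iii) Observe that the layer-2 feature map is itself $A_2\circ(\text{quadratic }M_2)\circ P_2\circ A_1 M_1 P_1$, so apply the template again inside each $(p,p')$ slot; the two independent copies of layer-2 features (coming from the $z\otimes z$ at layer 3) account for the product structure $(S_3\times S_2\times S_2)^2$ in the index $\alpha$, and the first $\diag$ at layer 3 becomes $\diag_2$ once it is pushed through the second application (since it must now identify two pairs of coordinates, one per copy). (iv) Track the feature map $\varphi_1(x_u)$ at the bottom and the patch-extraction shifts $L_c$ to read off that $G_\alpha[u_1,u_2,u_3,u_4]$ is evaluated at $(x_{u_1},x_{u_2},x_{u_3},x_{u_4})$, and assemble $E_\alpha x = A_{1,\alpha}^*\diag_2(A_{2,\alpha}^*\diag(A_3^*x))$ as the adjoint of the composite ``unpooling'' map, so $\Range(E_\alpha)$ is the constraint. (v) Read off the penalty~\eqref{eq:3layer_penalty} as the norm of the fully-unpooled representative: apply $A_3^{\dagger*}$, then $\diag$, then $A_{2,\alpha}^{\dagger*}$, then $\diag_2$, then $A_{1,\alpha}^{\dagger*}$ to $G_\alpha$, and take $L^2(\Omega_3)$ norm. (vi) Finally, justify the $\inf$ in the norm formula as in the one-layer and two-layer cases: different decompositions $G_\alpha$ correspond to different preimages $W$, and minimizing $\|W\|$ over representatives of $f$ translates into minimizing the penalty over decompositions, using that the pseudo-inverses realize the minimum-norm choice within each $\Range$ constraint.

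I expect the main obstacle to be the bookkeeping of \emph{which} $\diag$ appears \emph{where} — specifically, showing that pushing the outer layer-3 $\diag$ (which identifies a single spatial pair) through the second, nested application of the layer-2 template turns it into $\diag_2$ (identifying two spatial pairs simultaneously, one in each tensor copy), while the layer-2 quadratic kernels each contribute their own plain $\diag$ that then gets absorbed into the $A_{2,\alpha}^*$/$A_{1,\alpha}^*$ adjoints. Getting the commutation relations right between $\diag$, $\diag_2$, the tensor-product pooling operators, and the translation operators $L_c$ — e.g.\ that $(L_pA_2\otimes L_{p'}A_2)^*\diag$ behaves as claimed and that pseudo-inverses distribute over these compositions in the right order — is where the calculation is delicate but not deep. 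A secondary subtlety is handling the non-invertible (downsampled) case uniformly: one should phrase everything with adjoints $E_\alpha$ for the constraint and pseudo-inverse-adjoints $(\cdot)^{\dagger*}$ for the penalty, so that the argument degrades gracefully exactly as noted after Propositions~\ref{prop:one_layer} and~\ref{prop:two_layer_quad}, and verify that the minimum-norm representative still lies in the relevant ranges. Once the template lemma is proved cleanly, steps (ii)--(vi) are a disciplined induction-like unrolling rather than anything conceptually new, and the same scheme manifestly extends to $L>3$ layers with polynomial kernels of arbitrary degree.
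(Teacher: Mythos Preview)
Your proposal is correct and follows essentially the same route as the paper's proof: start from the explicit feature-map characterization of the RKHS, peel off $A_3$ by taking its adjoint, expand the quadratic $M_3 P_3$ to index by $(p,p')\in S_3^2$ and introduce $A_{2,\alpha}$ together with the outer $\diag$, then expand each copy of the layer-2 feature map to index by $(q,r)$ and $(q',r')$, introducing $A_{1,\alpha}$ and turning the tensor of two layer-2 $\diag$'s into the single $\diag_2$---this last identity, $\Phi_{2,qr}(x)\otimes\Phi_{2,q'r'}(x)=\diag_2(A_{1,\alpha}\Phi(x)^{\otimes 4})$, is exactly the ``delicate but not deep'' bookkeeping step you flag, and it is the only nontrivial calculation in the paper's proof as well. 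The paper carries this out by direct computation rather than by first isolating a reusable ``template lemma'', but the sequence of adjoint moves, the resulting constraint $G_\alpha\in\Range(E_\alpha)$, and the pseudo-inverse form of the penalty are obtained in the same order and for the same reasons you describe.
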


The constraint~\eqref{eq:3_layer_constraint} and penalty~\eqref{eq:3layer_penalty} resemble the corresponding constraint/penalty in the two-layer case for an order-4 polynomial kernel at the second layer, but provide more structure on the interactions, using a multi-scale structure that may model interactions between certain pairs of patches ($(x_{u_1}, x_{u_2})$ and~$(x_{u_3}, x_{u_4})$ in~\eqref{eq:f_decomp_three_layers}) more strongly than those between all four patches.
In addition to localizing the interactions~$G_\alpha$ around certain diagonals, the kernel also promotes spatial regularities: assuming that the spatial bandwidths of~$A_\ell$ increase with~$\ell$, the functions~$(u, v, w_1, w_2) \mapsto G_\alpha[u, u+w_1, u + v, u + v + w_2]$ may vary quickly with~$w_1$ or~$w_2$ (distances between patches in each of the two pairs), but should vary more slowly with~$v$ (distance between the two pairs) and even more slowly with~$u$ (a global position).

\section{Proofs}
\label{sec:proofs}

We recall the following result about reproducing kernel Hilbert spaces, which characterizes the RKHS of kernels defined by explicit Hilbert space features maps~\citep[see, \eg,][\S 2.1]{saitoh1997integral}.

\begin{theorem}[RKHS from explicit feature map]
\label{thm:rkhs_explicit_map}
Let~$H$ be some Hilbert space,~$\psi: \Xcal \to H$ a feature map, and~$K(x, x') = \langle \psi(x), \psi(x') \rangle_H$ a kernel on~$\Xcal$.
The RKHS~$\Hcal$ of~$K$ consists of functions~$f = \langle g, \psi(\cdot) \rangle_H$, with norm
\begin{equation}
\|f\|_\Hcal = \inf \{ \|g'\|_H : g' \in H \text{ s.t. } f = \langle g', \psi(\cdot) \rangle_H \}
\end{equation}
\end{theorem}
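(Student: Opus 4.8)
The plan is to realize the candidate function space as an isometric image of a closed subspace of $H$, verify directly that this space is a reproducing kernel Hilbert space with kernel $K$, and then invoke uniqueness of the RKHS to identify it with $\Hcal$. First I would introduce the linear map $T : H \to \Real^\Xcal$ defined by $(Tg)(x) = \langle g, \psi(x) \rangle_H$, whose range $\Fcal := \Range(T)$ is exactly the set of candidate functions $f = \langle g, \psi(\cdot) \rangle_H$. Its null space $N = \ker T = \{ g \in H : \langle g, \psi(x) \rangle_H = 0 \ \forall x \in \Xcal \} = (\overline{\vect\{\psi(x): x \in \Xcal\}})^\perp$ is a closed subspace, so $H = N \oplus N^\perp$ with $N^\perp = \overline{\vect\{\psi(x): x \in \Xcal\}}$. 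For any representative $g$ of a given $f \in \Fcal$, writing $g = g_N + g_\perp$ with $g_N \in N$ and $g_\perp \in N^\perp$ gives $Tg = Tg_\perp$ and $\|g\|_H^2 = \|g_N\|_H^2 + \|g_\perp\|_H^2 \ge \|g_\perp\|_H^2$, with equality iff $g_N = 0$. Hence the infimum defining the norm is attained at the unique representative $g_\perp \in N^\perp$, and $T$ restricted to $N^\perp$ is a bijection onto $\Fcal$. Transporting the inner product of $N^\perp$ through this bijection, i.e. setting $\langle Tg_\perp, Th_\perp \rangle_\Fcal := \langle g_\perp, h_\perp \rangle_H$ for $g_\perp, h_\perp \in N^\perp$, I would check it is well defined and yields $\|f\|_\Fcal = \|g_\perp\|_H = \inf\{ \|g'\|_H : f = \langle g', \psi(\cdot)\rangle_H\}$, matching the claimed norm.

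Next I would verify that $(\Fcal, \langle \cdot, \cdot \rangle_\Fcal)$ is an RKHS with kernel $K$. Completeness is immediate, since $N^\perp$ is a closed subspace of the Hilbert space $H$, hence complete, and the restriction of $T$ to $N^\perp$ is by construction an isometric isomorphism onto $\Fcal$. For the reproducing property, the key remark is that $\psi(x) \in \overline{\vect\{\psi(x'): x' \in \Xcal\}} = N^\perp$, so $K(\cdot, x) = \langle \psi(x), \psi(\cdot) \rangle_H$ belongs to $\Fcal$ with minimal-norm representative $\psi(x) \in N^\perp$. Then for any $f$ with representative $g_\perp \in N^\perp$,
\[
\langle f, K(\cdot, x) \rangle_\Fcal = \langle g_\perp, \psi(x) \rangle_H = (Tg_\perp)(x) = f(x),
\]
which is exactly the reproducing identity. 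By uniqueness of the reproducing kernel Hilbert space associated with a positive-definite kernel (Moore--Aronszajn), $\Fcal$ coincides with $\Hcal$ as a set and as a normed space, establishing the claim.

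The only genuinely delicate point is the passage to the minimal-norm representative: everything hinges on $N = \ker T$ being a \emph{closed} subspace, so that the orthogonal decomposition $H = N \oplus N^\perp$ applies. This closedness is what simultaneously guarantees that the infimum over representatives is attained and that the transported inner product is well defined, i.e. independent of the choice of representative. Once this orthogonality structure is in place, completeness and the reproducing property are short verifications, and the classical uniqueness of the RKHS closes the argument.
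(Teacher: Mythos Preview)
Your argument is correct and is the standard proof of this result: factor the linear map $T:H\to\Real^\Xcal$, $Tg=\langle g,\psi(\cdot)\rangle_H$, through $H/\ker T\cong(\ker T)^\perp$, transport the Hilbert structure, and verify the reproducing property using $\psi(x)\in(\ker T)^\perp$. The only subtlety you flag, closedness of $\ker T$, is indeed what makes the orthogonal decomposition and the attainment of the infimum work, and you handle it properly.

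Regarding comparison with the paper: the paper does \emph{not} prove this theorem. It is stated as a recalled fact with a citation to \S2.1 of Saitoh's monograph on integral transforms and reproducing kernels, and is then used as a black box in the proofs of Propositions~\ref{prop:one_layer}, \ref{prop:two_layer_quad}, and~\ref{prop:three_layer_quad}. Your proof is essentially the argument one finds in that reference (or in standard RKHS texts), so there is no discrepancy to report beyond the fact that you have supplied a proof where the paper only gives a pointer.
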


We also state here the generalization bound for kernel ridge regression used in Section~\ref{sec:generalization}, adapted from~\citet[Proposition 7.1]{bach2021ltfp}.
\begin{proposition}[Generalization bound for kernel ridge regression]
\label{prop:krr_bound}
Denote~$f^*(x) = \E_\rho[y|x]$. Assume~$f^* \in \Hcal$, $\Var_\rho[y|x] \leq \sigma^2$, $K(x,x) \leq 1$ a.s., and define
\begin{equation}
\hat f_\lambda := \arg\min_{f \in \Hcal} \frac{1}{n} \sum_{i=1}^n (y_i - f(x_i))^2 + \lambda \|f\|_\Hcal^2,
\end{equation}
for i.i.d.~data~$(x_i, y_i)\sim \rho$, $i = 1, \ldots, n$.
Let
\[
n \geq \max\left\{\frac{\|f^*\|_\infty}{\sigma^2}, \frac{\|f^*\|_\Hcal^2}{\sigma^2 \E_{\rho_X}[K(x,x)]}, \frac{\sigma^2 \E_{\rho_X}[K(x,x)]}{\|f^*\|_\Hcal^2} \right\}.
\]
For~$\lambda = \sqrt{\sigma^2 \E_{\rho_X}[K(x,x)] / n\|f^*\|_\Hcal^2}$, we have
\begin{equation}
\E[R(\hat f_\lambda) - R(f^*)] \leq C \|f^*\|_\Hcal \sqrt{\frac{\sigma^2 \E_{\rho_X}[K(x,x)]}{n}},
\end{equation}
where~$C$ is an absolute constant.
\end{proposition}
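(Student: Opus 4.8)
The plan is to run the standard bias--variance analysis of kernel ridge regression in terms of the population covariance operator $\Sigma := \E_{x\sim\rho_X}[\psi(x)\otimes\psi(x)]$ on $\Hcal$, where $\psi(x) = K(x,\cdot)$ is the canonical feature map; then $K(x,x)\le 1$ reads $\|\psi(x)\|_\Hcal^2\le 1$, $\Tr(\Sigma) = \E_{\rho_X}[K(x,x)]$, and $R(f)-R(f^*) = \|f-f^*\|_{L^2(\rho_X)}^2 = \|\Sigma^{1/2}(f-f^*)\|_\Hcal^2$. Writing $y_i = f^*(x_i)+\varepsilon_i$ with $\E[\varepsilon_i\mid x_i]=0$ and $\E[\varepsilon_i^2\mid x_i]\le\sigma^2$, the normal equations give $\hat f_\lambda = (\hat\Sigma+\lambda)^{-1}(\hat\Sigma f^* + \hat\xi)$, where $\hat\Sigma := \frac1n\sum_i\psi(x_i)\otimes\psi(x_i)$ and $\hat\xi := \frac1n\sum_i\varepsilon_i\psi(x_i)$, hence $\hat f_\lambda - f^* = -\lambda(\hat\Sigma+\lambda)^{-1}f^* + (\hat\Sigma+\lambda)^{-1}\hat\xi$. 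Using $\|a+b\|^2\le 2\|a\|^2+2\|b\|^2$, the excess risk splits into a \emph{bias} term $2\lambda^2\|\Sigma^{1/2}(\hat\Sigma+\lambda)^{-1}f^*\|_\Hcal^2$ and a \emph{variance} term $2\|\Sigma^{1/2}(\hat\Sigma+\lambda)^{-1}\hat\xi\|_\Hcal^2$.

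For the variance term I would condition on $x_1,\dots,x_n$: since $\E[\hat\xi\otimes\hat\xi\mid x]\preceq \tfrac{\sigma^2}{n}\hat\Sigma$, its conditional expectation is $\tfrac{2\sigma^2}{n}\Tr\!\big(\Sigma(\hat\Sigma+\lambda)^{-1}\hat\Sigma(\hat\Sigma+\lambda)^{-1}\big)\le \tfrac{2\sigma^2}{n}\Tr\!\big(\Sigma(\hat\Sigma+\lambda)^{-1}\big)$, and it remains to compare this with the degrees of freedom $\mathrm{df}(\lambda) := \Tr(\Sigma(\Sigma+\lambda)^{-1})\le \Tr(\Sigma)/\lambda = \E_{\rho_X}[K(x,x)]/\lambda$. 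For the bias term, on the event $\hat\Sigma+\lambda\succeq\tfrac12(\Sigma+\lambda)$ one gets $(\hat\Sigma+\lambda)^{-1}\Sigma(\hat\Sigma+\lambda)^{-1}\preceq 4(\Sigma+\lambda)^{-1}$ (because $\Sigma\preceq\Sigma+\lambda\preceq 2(\hat\Sigma+\lambda)$ and $(\hat\Sigma+\lambda)^{-1}\preceq 2(\Sigma+\lambda)^{-1}$), hence a bound $8\lambda\|f^*\|_\Hcal^2$. Thus both estimates reduce to a single high-probability spectral equivalence between $\hat\Sigma+\lambda$ and $\Sigma+\lambda$.

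Establishing that equivalence, i.e.\ $\|(\Sigma+\lambda)^{-1/2}(\hat\Sigma-\Sigma)(\Sigma+\lambda)^{-1/2}\|\le\tfrac12$ with probability $1-\delta$, is the main obstacle. I would obtain it from an operator Bernstein inequality applied to the i.i.d.\ self-adjoint summands $(\Sigma+\lambda)^{-1/2}\psi(x_i)\otimes\psi(x_i)(\Sigma+\lambda)^{-1/2}$, whose operator norms are bounded by $\sup_x\|(\Sigma+\lambda)^{-1/2}\psi(x)\|_\Hcal^2\le \|\psi(x)\|_\Hcal^2/\lambda\le 1/\lambda$ thanks to $K(x,x)\le1$; such a bound succeeds once $n\lambda$ is large enough (up to a $\log(\mathrm{df}(\lambda)/\delta)$ factor), and at the prescribed $\lambda = \sqrt{\sigma^2\E_{\rho_X}[K(x,x)]/(n\|f^*\|_\Hcal^2)}$ the second and third lower bounds assumed on $n$ guarantee $n\lambda\gtrsim 1$ and $\lambda\le 1$, which is what this step needs; any residual logarithmic factor from the high-probability estimate is absorbed into $C$ by the in-expectation argument of \citet{bach2021ltfp}.

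Finally I would collect the pieces: on the good event the bias and variance bounds combine to $R(\hat f_\lambda)-R(f^*)\lesssim \lambda\|f^*\|_\Hcal^2 + \tfrac{\sigma^2\E_{\rho_X}[K(x,x)]}{\lambda n}$, and on the complementary bad event (probability $\le\delta$) the crude bound $\|\Sigma^{1/2}(\hat\Sigma+\lambda)^{-1}\|\le 1/\lambda$ together with the assumption $\|f^*\|_\infty/\sigma^2\le n$ and a uniform bound on $|\hat f_\lambda(x)|$ makes the contribution negligible. Choosing $\lambda$ as above balances the two leading terms and yields $\E[R(\hat f_\lambda)-R(f^*)]\le C\|f^*\|_\Hcal\sqrt{\sigma^2\E_{\rho_X}[K(x,x)]/n}$; for the precise constants and the exact form of the concentration step I would simply invoke \citet[Proposition 7.1]{bach2021ltfp}, from which the statement is adapted.
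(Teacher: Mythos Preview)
Your proposal is correct and takes essentially the same approach as the paper: both arguments invoke \cite[Proposition~7.1]{bach2021ltfp}, bound the degrees of freedom by~$\Tr(\Sigma)/\lambda = \E_{\rho_X}[K(x,x)]/\lambda$ and the bias term by~$\lambda\|f^*\|_\Hcal^2$, and then balance by the stated choice of~$\lambda$. The only difference is that the paper's proof is three lines long (apply the cited bound, use the two elementary operator inequalities, optimize), whereas you additionally sketch the bias--variance decomposition and the operator-Bernstein concentration that underlie Bach's result; since you explicitly defer the constants and the concentration step to that reference anyway, your write-up collapses to the same short argument.
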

\begin{proof}
Under the conditions of the theorem, we may apply~\citep[Proposition 7.1]{bach2021ltfp}, which states that for~$\lambda \leq 1$ and~$n \geq \frac{5}{\lambda}(1 + \log(1/\lambda))$, we have
\begin{equation*}
\E[R(\hat f_\lambda) - R(f^*)] \leq 16 \frac{\sigma^2}{n} \Tr((\Sigma + \lambda I)^{-1} \Sigma) + 16 \lambda \langle f^*, (\Sigma + \lambda I)^{-1} \Sigma f^* \rangle_\Hcal + \frac{24}{n^2} \|f^*\|_\infty^2,
\end{equation*}
where~$\Sigma = \E_{\rho_X} [K(x, \cdot) \otimes K(x, \cdot)]$ is the covariance operator.
We conclude by using the inequalities
\begin{align*}
\Tr((\Sigma + \lambda I)^{-1} \Sigma) &\leq \frac{\Tr(\Sigma)}{\lambda} = \frac{\E_{\rho_X}[K(x,x)]}{\lambda} \\
\langle f^*, (\Sigma + \lambda I)^{-1} \Sigma f^* \rangle_\Hcal &\leq \|f^*\|_\Hcal^2,
\end{align*}
and optimizing for~$\lambda$.
\end{proof}

\subsection{Proof of Proposition~\ref{prop:one_layer} (RKHS of One-Layer Convolutional Kernel)}

\begin{proof}
From Theorem~\ref{thm:rkhs_explicit_map}, the RKHS contains functions of the form
\begin{equation*}
f(x) = \langle F, A \Phi(x) \rangle_{L^2(\Omega_1, \Hcal)},
\end{equation*}
with RKHS norm equal to the minimum of~$\|F\|_{L^2(\Omega_1, \Hcal)}$ over such decompositions.

We may alternatively write~$f(x) = \langle G, \Phi(x) \rangle_{L^2(\Omega, \Hcal)}$ with~$G = A^* F$.
The mapping from~$F$ to~$G$ is one-to-one if~$G \in \Range(A^*)$.
Then, we obtain that equivalently, the RKHS contains functions of this form, with~$G \in \Range(A^*)$, and with RKHS norm equal to the minimum of~$\|A^{\dagger *} G\|_{L^2(\Omega_1, \Hcal)}$ over such decompositions.
\end{proof}

\subsection{Proof of Proposition~\ref{prop:two_layer_quad} (RKHS of 2-layer CKN with quadratic~$k_2$)}

\begin{proof}
From Theorem~\ref{thm:rkhs_explicit_map}, the RKHS contains functions of the form
\begin{equation*}
f(x) = \langle F, A_2 M_2 P_2 A_1 \Phi_1(x) \rangle_{L^2(\Omega_2, \Hcal_2)},
\end{equation*}
with RKHS norm equal to the minimum of~$\|F\|_{L^2(\Omega_2, \Hcal_2)}$ over such decompositions.
Here,~$\Phi_1(x) \in L^2(\Omega, \Hcal)$ is given by~$\Phi_1(x)[u] = \varphi_1(x_u)$, so that~$\Phi(x)$ in the statement is given by~$\Phi(x) = \Phi_1(x) \otimes \Phi_1(x)$.
We also have that~$\Hcal_2 = (\Hcal \otimes \Hcal)^{|S_2| \times |S_2|}$, so that we may write~$F = (F_{pq})_{p,q \in S_2}$ with~$F_{pq} \in L^2(\Omega_2, \Hcal \otimes \Hcal)$.

For~$p,q \in S_2$, denoting by~$L_c$ the translation operator~$L_c x[u] = x[u - c]$, we have
\begin{align*}
(M_2 P_2 A_1 \Phi_1(x)[u])_{pq} &= L_p A_1 \Phi_1(x)[u] \otimes L_q A_1 \Phi_1(x)[u] \\
	&= \diag (L_p A_1 \Phi_1(x) \otimes L_q A_1 \Phi_1(x))[u] \\
	&= \diag ((L_p A_1 \otimes L_q A_1) \Phi(x))[u].
\end{align*}

Then, we have
\begin{align*}
\langle F_{pq}, (A_2 M_2 P_2 A_1 \Phi_1(x))_{pq} \rangle_{L^2(\Omega_2, \Hcal \otimes \Hcal)}
	&= \langle F_{pq}, A_2 \diag ((L_p A_1 \otimes L_q A_1) \Phi(x)) \rangle_{L^2(\Omega_2, \Hcal \otimes \Hcal)} \\
	&= \langle A_2^* F_{pq}, \diag ((L_p A_1 \otimes L_q A_1) \Phi(x)) \rangle_{L^2(\Omega_1, \Hcal \otimes \Hcal)} \\
	&= \langle \diag(A_2^* F_{pq}), (L_p A_1 \otimes L_q A_1) \Phi(x) \rangle_{L^2(\Omega_1^2, \Hcal \otimes \Hcal)} \\
	&= \langle (L_p A_1 \otimes L_q A_1)^* \diag(A_2^* F_{pq}), \Phi(x) \rangle_{L^2(\Omega^2, \Hcal \otimes \Hcal)}.
\end{align*}

We may then write this as~$\langle G_{pq}, \Phi(x) \rangle_{L^2(\Omega^2, \Hcal \otimes \Hcal)}$ with
\[
G_{pq} = (L_p A_1 \otimes L_q A_1)^* \diag(A_2^*  F_{pq}),
\]
and the mapping between~$F_{pq} \in L^2(\Omega_2, \Hcal \otimes \Hcal)$ and~$G_{pq} \in L^2(\Omega^2, \Hcal \otimes \Hcal)$ is one-to-one if~$G_{pq} \in \Range((L_p A_1 \otimes L_q A_1)^*)$, and $\diag((L_p A_1 \otimes L_q A_1)^{\dagger *} G_{pq}) \in \Range(A_2^*)$.
We may then equivalently write the RKHS norm as the minimum over~$G_{pq}$ satisfying such constraints for all~$p,q \in S_2$, of the quantity
\begin{equation*}
\sum_{p,q \in S_2} \|F_{pq}\|^2 = \sum_{p,q \in S_2} \|A_2^{\dagger*}\diag((L_p A_1 \otimes L_q A_1)^{\dagger*} G_{pq})\|^2_{L_2(\Omega_2, \Hcal \otimes \Hcal)}.
\end{equation*}
\end{proof}

\subsection{Proof of Proposition~\ref{prop:three_layer_quad} (RKHS of 3-layer CKN with quadratic~$k_{2/3}$)}

\begin{proof}
Let~$\Phi(x) = (\varphi_1(x_u))_u \in L^2(\Omega, \Hcal)$, so that we may write
\begin{align*}
\sum_{u_1, u_2, u_3, u_v \in \Omega} G[u_1, u_2, u_3, u_4](x_{u_1}, x_{u_2}, x_{u_3}, x_{u_4}) = \langle G, \Phi(x)^{\otimes 4} \rangle_{L^2(\Omega^4, \Hcal^{\otimes 4})},
\end{align*}
for some~$G \in L^2(\Omega^4, \Hcal^{\otimes 4})$.

From Theorem~\ref{thm:rkhs_explicit_map}, the RKHS contains functions of the form
\begin{equation}
\label{eq:f_def_three_layer}
f(x) = \langle F, A_3 M_3 P_3 A_2 \Phi_2(x) \rangle_{L^2(\Omega_3, \Hcal_3)},
\end{equation}
with RKHS norm equal to the minimum of~$\|F\|_{L^2(\Omega_3, \Hcal_3)}$ over such decompositions.
Here,~$\Phi_2(x) \in L^2(\Omega_1, \Hcal_2) = L^2(\Omega_1, (\Hcal \otimes \Hcal)^{|S_2| \times |S_2|})$ is given as in the proof of Proposition~\ref{prop:two_layer_quad}, by
\begin{equation*}
\Phi_{2,qr}(x)[u] = \diag((L_q A_1 \otimes L_r A_1) (\Phi(x) \otimes \Phi(x)))[u],
\end{equation*}
for~$q,r \in S_2$.
A patch~$P_3 A_2 \Phi_2(x)[u]$ is then given by
\begin{equation*}
P_3 A_2 \Phi_2(x)[u] = (L_p A_2 \Phi_{2,qr}(x)[u])_{p \in S_3, q,r \in S_2} \in (\Hcal \otimes \Hcal)^{|S_3| \times |S_2| \times |S_2|}.
\end{equation*}
Applying the quadratic feature map given by~$\varphi_3(z) = z \otimes z \in (\Hcal^{\otimes 4})^{(|S_3| \times |S_2| \times |S_2|)^2}$ for~$z \in (\Hcal \otimes \Hcal)^{|S_3| \times |S_2| \times |S_2|}$,
we obtain for~$\alpha = (p,q,r,p',q',r') \in (S_3 \times S_2 \times S_2)^2$,
\begin{align*}
(M_3 P_3 A_2 \Phi_2(x)[u])_{\alpha} &= L_p A_2 \Phi_{2,qr}(x)[u] \otimes L_{p'} A_2 \Phi_{2,q'r'}(x)[u] \\
	&= \diag (A_{2,\alpha} (\Phi_{2,qr}(x) \otimes \Phi_{2,q'r'}(x)))[u],
\end{align*}
where
\begin{equation*}
A_{2,\alpha} = L_p A_2 \otimes L_{p'} A_2.
\end{equation*}
Now, one can check that we have the following relation:
\begin{align*}
\Phi_{2,qr}(x) \otimes \Phi_{2,q'r'}(x) &= \diag((L_q A_1 \otimes L_r A_1) (\Phi(x) \otimes \Phi(x))) \otimes \diag((L_{q'} A_1 \otimes L_{r'} A_1) (\Phi(x) \otimes \Phi(x))) \\
	&= \diag_2(A_{1,\alpha} \Phi(x)^{\otimes 4}),
\end{align*}
with
\begin{equation*}
A_{1,\alpha} = L_q A_1 \otimes L_r A_1 \otimes L_{q'} A_1 \otimes L_{r'} A_1.
\end{equation*}

Since~$\Hcal_3 = (\Hcal^{\otimes 4})^{(|S_3| \times |S_2| \times |S_2|)^2}$, we may write~$F = (F_\alpha)_{\alpha \in (S_3 \times S_2 \times S_2)^2}$, with each~$F_\alpha \in L^2(\Omega_3, \Hcal^{\otimes 4})$.
We then have
\begin{align*}
\langle F_\alpha, &(A_3 M_3 P_3 A_2 \Phi_2(x))_\alpha \rangle_{L^2(\Omega_3)} \\
	&= \langle F_\alpha, A_3 \diag(A_{2,\alpha} \diag_2 (A_{1,\alpha} \Phi(x)^{\otimes 4})) \rangle_{L^2(\Omega_3)} \\
	&= \langle \diag(A_3^* F_\alpha), A_{2,\alpha} \diag_2 (A_{1,\alpha} \Phi(x)^{\otimes 4}) \rangle_{L^2(\Omega_2^2)} \\
	&= \langle \diag_2(A_{2,\alpha}^* \diag(A_3^* F_\alpha)), A_{1,\alpha} \Phi(x)^{\otimes 4} \rangle_{L^2(\Omega_1^4)} \\
	&= \langle A_{1,\alpha}^* \diag_2(A_{2,\alpha}^* \diag(A_3^* F_\alpha)), \Phi(x)^{\otimes 4} \rangle_{L^2(\Omega^4)}.
\end{align*}
We may write this as~$\langle G_\alpha, \Phi(x)^{\otimes 4} \rangle_{L^2(\Omega^4, \Hcal^{\otimes 4})}$, with
\begin{equation*}
G_\alpha = A_{1,\alpha}^* \diag_2(A_{2,\alpha}^* \diag(A_3^* F_\alpha)).
\end{equation*}
The mapping from~$F_\alpha$ to~$G_\alpha$ is bijective if~$G_\alpha$ is constrained to the lie in the range of the operator~$E_\alpha$.
If~$G_\alpha$ satisfies this constraint, we may write
\begin{equation*}
F_\alpha = A_3^{\dagger*} \diag(A_{2,\alpha}^{\dagger*}\diag_2(A_{1,\alpha}^{\dagger*}G_\alpha)).
\end{equation*}
Then, the resulting penalty on~$G_\alpha$ is as desired.

\end{proof}

\subsection{Proof of Proposition~\ref{prop:1layer_generalization} (generalization for one-layer CKN)}
\begin{proof}
Note that we have
\begin{align*}
\E_x[K_1(x,x)] &= \sum_u \sum_{v,r} h[u - v] h[u - v - r] \E_x[k_1(x_v, x_{v-r})] \\
	&\leq \sum_{v,r} \langle h, L_r h \rangle \sigma_r^2 \\
	&= |\Omega| \sum_{r} \langle h, L_r h \rangle \sigma_r^2.
\end{align*}

It remains to verify that~$\|f^*\|_{\Hcal_{K_1}} = \sqrt{|\Omega|} \|g\|_\Hcal$.
Note that if we denote~$G = (g)_{u \in \Omega} \in L^2(\Omega, \Hcal)$, then we have~$A^* G = G$, since~$\sum_v h[v - u] G[v] = (\sum_v h[v-u]) g = g$.
This implies that~$A^{*\dagger} G = G$, regardless of which pooling filter is used.
Then we have, by~\eqref{eq:one_layer_norm} that~$\|f^*\|^2 \leq |\Omega| \|g\|^2$.
Further, since~$g$ is of minimal norm, no other~$G \in L^2(\Omega, \Hcal)$ may lead to a smaller norm, so that we can conclude~$\|f^*\|^2 = |\Omega| \|g\|^2$.
\end{proof}

\subsection{Proof of Proposition~\ref{prop:2layer_generalization} (generalization for two-layer CKN with quadratic~$k_2$)}

\begin{proof}
We begin by studying the ``variance'' quantity~$\E_x[K_2(x, x)]$.
By expanding the construction of the kernel~$K_2$, we may write
\begin{align*}
K_2&(x,x) = \sum_{p,q \in S_2} \sum_{u, v, v'} h_2[u \!-\! v] h_2[u \!-\! v'] \times \\
&\sum_{\substack{w_1, w_2 \\ w_1', w_2'}}  h_1[v \!-\! w_1] h_1[v \!-\! w_2] h_1[v' \!-\! w_1'] h_1[v' \!-\! w_2'] k_1(x_{w_1-p}, x_{w'_1-p}) k_1(x_{w_2-q}, x_{w'_2-q}).
\end{align*}

Upper bounding the quantity~$\E[k_1(x_{w_1-p}, x_{w'_1-p}) k_1(x_{w_2-q}, x_{w'_2-q})]$ by~$1$ when~$w_1 = w'_1$ and~$w_2 = w'_2$, and by~$\epsilon$ otherwise, the sum of the coefficients in front of~$1$ can be bounded as follows:
\begin{align*}
\sum_{p,q \in S_2} &\sum_{u, v, v', w_1, w_2} h_2[u \!-\! v] h_2[u \!-\! v'] h_1[v \!-\! w_1] h_1[v \!-\! w_2] h_1[v' \!-\! w_1] h_1[v' \!-\! w_2] \\
	&= |S_2|^2 \sum_{u, v, v'} \sum_v h_2[u \!-\! v] h_2[u \!-\! v] \langle L_v h_1, L_{v'} h_1 \rangle^2 \\
	&= |S_2|^2 \sum_{v,v'} \langle L_v h_2, L_{v'} h_2 \rangle \langle L_v h_1, L_{v'} h_1 \rangle^2 \\
	&= |\Omega| |S_2|^2 \sum_v \langle h_2, L_v h_2 \rangle \langle h_1, L_v h_1 \rangle^2.
\end{align*}
while the sum of coefficients bounded by~$\epsilon$ is upper bounded by~$|\Omega| |S_2|^2$.
Overall, this yields
\begin{equation}
\E_x[K_2(x, x)] \leq |S_2|^2 |\Omega| \left(\sum_v \langle h_2, L_v h_2 \rangle \langle h_1, L_v h_1 \rangle^2 + \epsilon \right).
\end{equation}

The bounds obtained for the example function~$f^*(x) = \sum_{u,v} g(x_u, x_v)$ rely on plugging in the values of the Dirac filter~$h[u] = \delta(u = 0)$ or average pooling filters~$h[u] = 1/|\Omega|$ in the expression of~$\E_x[K_2(x,x)]$, and on computing the norm~$\|f^*\|_{K_2}$ for different architectures using Eq.~\eqref{eq:norm_two_layer} in Proposition~\ref{prop:two_layer_quad}.
Computing~$\E_x[K_2(x,x)]$ is immediate using the expression above.
Bounding~$\|f^*\|_{K_2}$ is more involved, and requires finding appropriate decompositions of the form~$f^*(x) = \sum_{p,q} \sum_{u,v} G_{pq}[u,v](x_u, x_v)$ in order to leverage Proposition~\ref{prop:two_layer_quad}:
\begin{itemize}
	\item When using a Dirac filter at the first layer, we need~$|S_2| = |\Omega|$ in order to capture log-range interaction terms, and we represent~$f^*$ as a sum of~$G_p$ that are non-zero and equal to~$g$ only on the $p$-th diagonal, \ie, $G_p[u,v] = g$ when $v = u + p$, and zero otherwise. We then verify~$\sum_p \sum_{u,v} G_p[u,v](x_u,x_v) = \sum_p \sum_u g(x_u,x_{u+p}) = f^*(x)$. Then, the expression~\eqref{eq:norm_two_layer} on this decomposition yields~$|S_2| |\Omega| \|g\|^2_{\Hcal \otimes \Hcal} = |\Omega|^2 \|g\|^2_{\Hcal \otimes \Hcal}$, for any choice of pooling~$h_2$ such that~$\|h_2\|_1 = 1$ (using similar arguments to the proof of Proposition~\ref{prop:1layer_generalization}).
	This is then equal to the squared norm of~$f^*$ due to the minimality of~$\|g\|_{\Hcal \otimes \Hcal}$.
	\item When using average pooling at the first layer and~$|S_2| = |\Omega|$, we may use a single term~$G[u,v]$ with all entries equal to~$g$, \ie, a decomposition $f^*(x) = \sum_{u, v} G[u,v](x_u, x_v)$. Using~\eqref{eq:norm_two_layer}, we obtain an upper bound~$|\Omega| \|g\|^2$ on the squared norm. The same decomposition can be used when~$|S_2| = 1$, leading to the same bound.
\end{itemize}

\end{proof}

\section{Generalization Gains under Specific Data Models}
\label{sec:generalization_appx}

In this section, we consider simple models of architectures and data distribution where we may quantify more precisely the improvements in sample complexity guarantees thanks to pooling.

We consider architectures with non-overlapping patches, and a data distribution where the patches are independent and uniformly distributed on the sphere~$\Sbb^{d-1}$ sphere in~$d := p|S_1|$ dimensions.
Further, we consider a dot-product kernel on patches of the form~$k_1(z, z') = \kappa(\langle z, z' \rangle)$ for~$z, z' \in \Sbb^{d-1}$, with the common normalization~$\kappa(1) = 1$.

In the construction of Section~\ref{sec:background}, using non-overlapping patches corresponds to taking a downsampling factor~$s_1 = |S_1|$ at the first layer, and a corresponding pooling filter such that~$h_1[u] = 0$ for~$u \ne 0 \mod |S_1|$.
Alternatively, we may more simply denote patches by~$x_u$ with~$u \in \Omega$ by considering a modified signal with more channels ($x_u = x[u]$ of dimension~$p e$ instead of~$p$, where~$e$ is the patch size) so that extracting patches of size~$1$ actually corresponds to a patch of size~$e$ of the underlying signal.
We then have that the patches~$x_u$ in a signal~$x$ are i.i.d., uniformly distributed on the sphere~$\Sbb^{d - 1}$. We denote the uniform measure on~$\Sbb^{d-1}$ by~$d\tau$.

We note that when patches are in high dimension, overlapping patches may become near-orthogonal, which could allow extensions of our arguments below to the case with overlap, yet this may require different tools similar to~\cite{mei2021learning}. We leave these questions to future work.

\subsection{Quantifying the trace of the covariance operator~$E[K(x,x)]$}
\label{sec:generalization_models}

In this section, we focus on the ``variance'' term~$\E[K(x,x)]$ which is used in the generalization results of Section~\ref{sec:generalization}.

\paragraph{One layer.}
In the one-layer case, we clearly have~$\sigma_0^2 := \E[\kappa(\langle x_u, x_u \rangle)] = \kappa(1) = 1$.
For~$u \ne v$, since patches~$x_u$ and~$x_v$ are independent and i.i.d., $\sigma_{u-v}^2$ is a constant independent of~$u, v$, which may be computed by integration on the sphere as:
\begin{equation}
\label{eq:sigma_uv}
\sigma_{u-v}^2 = \E[\kappa(\langle x_u, x_v \rangle)] = \E_{x_u \sim \tau}[\E_{x_v \sim \tau}[\kappa(\langle x_u, x_v \rangle) | x_u]] = \frac{\omega_{d-2}}{\omega_{d-1}}\int_{-1}^1 \kappa(t) (1 - t^2)^{\frac{d-3}{2}}dt,
\end{equation}
where we used a standard change of variable~$t = \langle x_u, x_v \rangle$ when integrating~$x_v$ over~$\Sbb^{d-1}$ (see, \eg,~\citealp{costas2014spherical}), with~$\omega_{p-1} = 2 \pi^{p/2}/\Gamma(p/2)$ the surface measure of the sphere~$\Sbb^{p-1}$.
Note that the integral in~\eqref{eq:sigma_uv} corresponds to the constant component in the Legendre decomposition of~$\kappa$, which is known for common kernels as consider in various works studying spectral properties of dot-product kernels~\citep{bach2017breaking,minh2006mercer}.
For instance, for the exponential kernel (or Gaussian on the sphere)~$\kappa(\langle x, y \rangle) = e^{-\frac{\|x - y\|^2}{2\sigma^2}} = e^{\frac{1}{\sigma^2}(\langle x, y \rangle - 1)}$, which is used in most of our experiments with~$\sigma = 0.6$, we have~\cite[Theorem 2]{minh2006mercer}:
\begin{equation*}
\frac{\omega_{d-2}}{\omega_{d-1}} \int_{-1}^1 \kappa(t) (1 - t^2)^{\frac{d-3}{2}}dt = e^{-1/\sigma^2} (2 \sigma^2)^{(d-2)/2} I_{d/2 - 1}(1/\sigma^2) \Gamma(d/2),
\end{equation*}
where~$I$ denotes the modified Bessel function of the first kind.
For arc-cosine kernels, it may be obtained by leveraging the random feature expansion of the kernel~\citep{bach2017breaking}.
More generally, we also note that when the patch dimension~$d$ is large, we have
\begin{equation*}
\frac{\omega_{d-2}}{\omega_{d-1}} \int_{-1}^1 \kappa(t) (1 - t^2)^{\frac{d-3}{2}}dt \rightarrow \kappa(0), \quad \text{ as } d \to \infty,
\end{equation*}
since~$\frac{\omega_{d-2}}{\omega_{d-1}} (1 - t^2)^{\frac{d-3}{2}}$ is a probability density that converges weakly to a Dirac mass at 0.
In particular, for the exponential kernel with~$\sigma = 0.6$, we have~$\kappa(0) = e^{-1/\sigma^2} \approx 0.06$.
When learning a translation-invariant function, the bound in Prop.~\ref{prop:1layer_generalization} then shows that global average pooling yields an improvement w.r.t.~no pooling of order~$|\Omega| / (1 + 0.06 |\Omega|)$.
Note that removing the constant component of~$\kappa$, \ie, using the kernel~$\frac{\kappa(u) - \kappa(0)}{\kappa(1) - \kappa(0)}$, may further improve this bound, leading to a denominator very close to~$1$ when~$d$ is large, and hence an improvement in sample complexity of order~$|\Omega|$.
We also remark that the dependence on~$\kappa(0)$ may be removed by using a finer generalization analysis beyond uniform convergence that leverages spectral properties of the kernel (see Section~\ref{sec:fast_rates}).

\paragraph{Two layers with quadratic~$k_2$.}
For the two-layer case, we may obtain expressions of~$\E[k_1(x_u, x_{u'}) k_1(x_v, x_{v'})]$ as above.
Denote~$\epsilon := \E_{z, z' \sim \tau}[k_1(z, z')]$ where~$z, z'$ are independent, which is given in~\eqref{eq:sigma_uv}.
We may have the following cases:
\begin{itemize}[leftmargin=0.5cm]
	\item If $u=u'$ and $v=v'$, we have, trivially, $\E[k_1(x_u, x_{u'}) k_1(x_v, x_{v'})] = 1$.
	\item If~$u=u'$ and~$v \ne v'$, we have~$\E[k_1(x_u, x_{u'}) k_1(x_v, x_{v'})] = \E[k_1(x_v, x_{v'})] = \epsilon$, since~$x_v$ and~$x_{v'}$ are independent. The same holds if~$u \ne u'$ and~$v = v'$.
	\item If~$|\{u, v, u', v'\}| = 4$, we have
	\begin{equation*}
	\E[k_1(x_u, x_{u'}) k_1(x_v, x_{v'})] = \E[k_1(x_u, x_{u'})] \E[k_1(x_v, x_{v'})] = \epsilon^2.
	\end{equation*}
	\item If~$u = v$ and~$|\{u, u', v'\}| = 3$, then we have
	\begin{equation*}
	\E[k_1(x_u, x_{u'}) k_1(x_v, x_{v'})] = \E_{x_u}[\E_{x_{u'}}[k_1(x_u, x_{u'})|x_u] \E_{x_v'}[k_1(x_u, x_{v'})|x_u]] = \epsilon^2,
	\end{equation*}
	by using $\E_{x_{u'}}[k_1(x_u, x_{u'})|x_u] = \E_{z, z' \sim \tau}[k_1(z, z')]$, which holds by rotational invariance.
	\item If~$u = v'$ and~$u' = v$, we have~$\E[k_1(x_u, x_{u'}) k_1(x_v, x_{v'})] = \E_{z, z' \sim \tau}[k_1(z, z')^2] =: \tilde \epsilon$. This takes the same form as~\eqref{eq:sigma_uv}, but with a different kernel function~$\kappa^2$ instead of~$\kappa$. Note that in the case of the Exponential kernel,~$\kappa^2$ is also an exponential kernel with different bandwidth.
\end{itemize}
Overall, when~$\epsilon$ and~$\tilde \epsilon$ are small compared to 1, we can see that the quantity~$\E[k_1(x_u, x_{u'}) k_1(x_v, x_{v'})]$ is small compared to 1 unless~$u = u'$ and~$v = v'$, thus satisfying the assumptions in Prop.~\ref{prop:2layer_generalization}.
As described above, we may obtain expressions of~$\epsilon$ and~$\tilde \epsilon$ in various cases, and in particular these vanish in high dimension when using a kernel with~$\kappa(0) = 0$.

\subsection{Fast rates}
\label{sec:fast_rates}

In this section, we derive spectral decompositions of 1-layer CKN architectures with non-overlapping patches under the product of spheres distribution described in the previous section.
This allows us to derive fast rates that depend on the complexity of the target functions on patches, and shows similar improvement factors to those derived in Section~\ref{sec:generalization_models}, without the~$\kappa(0)$ term, which in fact turns out to only be due to a single eigenspace, namely constant functions.
We note that our derivation extends~\citep{favero2021locality} to the case of generic pooling filters, and considers a different data distribution.

Before studying the 1-layer case, we remark that while it may seem natural to extend such decompositions to the 2-layer case using tensor products of spherical harmonics, as done by~\citet{scetbon2020harmonic} in the case without pooling, it appears that pooling may make it more challenging to find an eigenbasis since subspaces consisting of tensor products of spherical harmonics with fixed total degree are no longer left stable by the kernel.\footnote{For instance, the term~$k_1(x_w, y_u) k_1(x_w, y_v)$, which may only appear in the presence of pooling, maps the polynomial~$Y_k(x_u) Y_k(x_v)$ of degree~$2k$ to a polynomial $\mu_k^2 Y_k(x_w)^2$ which is not necessarily orthogonal to all spherical harmonics tensor products of degree smaller than 2k.} We thus leave such a study to future work.

We begin by considering the following Mercer decomposition of the patch kernel
\begin{equation*}
k_1(z, z') = \kappa(\langle z, z' \rangle) = \sum_{k=0}^{\infty} \mu_k \sum_{j=1}^{N(d,k)} Y_{k,j}(z) Y_{k,j}(z'),
\end{equation*}
where~$Y_{k,j}$ for $k \geq 0$ and $j = 1, \ldots, N(d,k)$ are spherical harmonic polynomials of degree~$k$ forming an orthonormal basis of~$L^2(d\tau)$. The~$\mu_k$ here are Legendre/Gegenbauer coefficients of the function~$\kappa$~\citep[see, \eg,][]{bach2017breaking,smola2001regularization}.

Note that the 1-layer kernel may be written as
\begin{equation*}
K_h(x, y) = \sum_{u, v \in \Omega} h \circledast \bar{h}[u - v] \kappa(\langle x_u, y_v \rangle),
\end{equation*}
where~$\circledast$ denotes circular convolution and~$\bar{h}[u] := h[-u]$. We denote by~$\tilde{e}_w[u] = \exp(2i\pi w u / |\Omega|)$, $w=0, \ldots, |\Omega|-1$, the DFT basis vectors, and by~$e_w = \tilde{e}_w / \sqrt{|\Omega|}$  the normalized DFT basis vectors, which satisfy~$\langle e_w, e_w \rangle = \sum_u e_w[u] e^*_w[u] = 1$, where~$z^*$ is the complex conjugate of~$z$.
Define the Fourier coefficients
\begin{equation*}
\hat h[w] = \langle h, \tilde{e}_w \rangle = \sum_u h[u] e^{-2i\pi w u/|\Omega|},
\end{equation*}
and let~$\lambda_w := \widehat{h \circledast \bar{h}}[w] = |\hat{h}[w]|^2$.
Note that when the filter is normalized s.t.~$\|h\|_1 = 1$, we have~$\lambda_0 = \hat h[0] = 1$.
We will also use the Parseval identity~$\|h\|_2^2 = (\sum_w \lambda_w) / |\Omega|$.
Using the inverse DFT, it holds
\begin{equation*}
h \circledast \bar{h}[u - v] = \frac{1}{|\Omega|}\sum_{w=0}^{|\Omega|-1} \lambda_w \tilde{e}_w[u - u] = \frac{1}{|\Omega|}\sum_{w=0}^{|\Omega|-1} \lambda_w \tilde{e}_w[u] \tilde{e}_w^*[v] = \sum_{w=0}^{|\Omega|-1} \lambda_w e_w[u] e_w^*[v].
\end{equation*}
Then, we have
\begin{equation*}
K_h(x, y) = \sum_{w=0}^{|\Omega|-1} \sum_{k \geq 0} \lambda_w \mu_k \sum_{j=1}^{N(d,k)} \left(\sum_u e_w[u] Y_{k,j}(x_u) \right) \left(\sum_u e^*_w[u] Y_{k,j}(y_u) \right).
\end{equation*}
Note that when~$k=0$, we have~$Y_{0,1}(x_u) = 1$ for all~$u$, hence
\[\left(\sum_u e_w[u] Y_{0,1}(x_u) \right) \left(\sum_u e^*_w[u] Y_{0,1}(y_u) \right) = (\sum_u e_w[u]) (\sum_v e^*_w[v]) = |\Omega| \1\{w = 0\},\]
since~$e_0 = |\Omega|^{-1/2} (1, \ldots, 1)$ and $\sum_u e_w[u] = 0$ for~$w > 0$.
Then, we may write
\begin{equation*}
K_h(x, y) = |\Omega| \lambda_0 \mu_0 \phi_{0}(x) \phi^*_{0}(y) + \sum_{w=0}^{|\Omega|-1} \sum_{k \geq 1} \lambda_w \mu_k \sum_{j=1}^{N(d,k)} \phi_{w,k,j}(x) \phi^*_{w,k,j}(y),
\end{equation*}
with~$\phi_0(x) = 1$, $\phi_{w,k,j}(x) = \sum_u e_w[u] Y_{k,j}(x_u)$, and~$\phi^*$ denotes the complex conjugate of~$\phi$.

It is then easy to check that the~$\phi_0$ and~$\phi_{w,k,j}$ form an orthonormal basis of~$L^2(d \tau^{\otimes |\Omega|})$. We thus have obtained a Mercer decomposition of the kernel~$K_h$ w.r.t.~the data distribution, so that its eigenvalues are also the eigenvalues of the covariance operator~\citep{caponnetto2007optimal}, and control generalization performance, typically through the \emph{degrees of freedom}
\begin{equation*}
\mathcal N(\lambda) = \Tr((\Sigma + \lambda I)^{-1} \Sigma) = \sum_{m \geq 0} \frac{\xi_m}{\lambda + \xi_m},
\end{equation*}
where~$\Sigma$ is the covariance operator, and~$(\xi_m)_m$ is the collection of its eigenvalues.
In particular, if we have a decay~$\xi_m \asymp m^{-\alpha}$ (with~$\alpha < 1$), then we have~$\mathcal N(\lambda) \leq O(\lambda^{-1/\alpha})$, which then leads to a fast rate of $n^{-\alpha/(\alpha + 1)}$ on the excess risk when optimizing for~$\lambda$ in kernel ridge regression~\citep{bach2021ltfp,caponnetto2007optimal}.
In our case, the degrees of freedom takes the form
\begin{equation}
\label{eq:dof}
\mathcal N_h(\lambda) = \frac{|\Omega| \lambda_0 \mu_0}{\lambda + |\Omega| \lambda_0 \mu_0} + \sum_{w=0}^{|\Omega| - 1}  \sum_{k \geq 1} N(d,k) \frac{\lambda_w \mu_k}{\lambda + \lambda_w \mu_k}
\end{equation}

We make a few remarks:
\begin{itemize}[leftmargin=1cm]
	\item Given that the number of spatial frequencies~$w$ is fixed, the asymptotic decay rate of eigenvalues associated to the~$\phi_{w,k,j}$ (that is, $\lambda_w \mu_k$, each with multiplicity~$N(d,k)$) is the same as that of the eigenvalues associated to~$\phi_{w_0,k,j}$ for some fixed~$w_0$, which in turn corresponds to the decay for the corresponding dot-product kernel on the sphere.
	For instance, if~$\kappa$ is the arc-cosine kernel, we have~$\xi_m \asymp m^{-\alpha}$ with~$\alpha = \frac{d+2}{d-1}$, and more generally~$\alpha = \frac{2s}{d-1}$ for a kernel resembling a Sobolev space with~$s$ bounded derivatives.
	This then leads to a rate~$n^{2s/(2s + (d-1))}$, which only depends on the dimension~$d$ of patches, rather than the full dimension~$d|\Omega|$.
	One may also add more general assumption on the smoothness of the localized components of~$f^*$ (such as~$g$ in Proposition~\ref{prop:1layer_generalization}) in order to get rates that depend explicitly on the order of smoothness~$s$ of such components~\citep[as in][]{caponnetto2007optimal}.
	\item The eigenvalue associated to~$\phi_0$ plays a minor role as by itself as it only contributes at most $\tau_\rho^2/n$ to the excess risk, which is negligible compared to the rest of the eigenvalues which lead to a slower $n^{- \alpha / (\alpha + 1)}$ rate.
	\item With no pooling ($h$ is a Dirac delta), we have~$\lambda_w = |\hat h[w]|^2 = 1$ for all~$w$. We then have
	\begin{equation*}
	\mathcal N_h(\lambda) \leq 1 + |\Omega| \mathcal N_\kappa(\lambda),
	\end{equation*}
	where we defined~$\mathcal N_\kappa(\lambda) := \sum_{k \geq 1} N(d,k) \mu_k / (\lambda + \mu_k)$.
	\item With global pooling ($h = 1/|\Omega|$ is constant), we have~$\lambda_0 = 1$, and~$\lambda_w = 0$ for~$w > 0$. This yields
	\begin{equation}
	\label{eq:dof_global}
	\mathcal N_h(\lambda) \leq 1 + \mathcal N_\kappa(\lambda).
	\end{equation}
	This then yields an improvement by a factor~$|\Omega|$ in sample complexity guarantees compared to the scenario above with no pooling, namely the dominant term in the excess risk bound will be~$C (1/n)^{\alpha/(\alpha + 1)}$ compared to $C (|\Omega| / n)^{\alpha / (\alpha + 1)}$.
	\item For more general pooling, one may exploit specific decays of~$\lambda_w$ to obtain finer bounds. We may also obtain the following bound by Jensen's inequality
	\begin{align*}
	\mathcal N_h(\lambda) &\leq 1 + |\Omega|  \sum_{k \geq 1} N(d,k) \frac{\bar \lambda \mu_k}{\lambda + \bar \lambda \mu_k} \\
		&\leq 1 + |\Omega|  \mathcal N_\kappa \left(\frac{\lambda}{\|h\|_2^2} \right)
	\end{align*}
	where we used that~$\bar \lambda = (\sum_w \lambda_w) / |\Omega| = \|h\|_2^2$, by Parseval's identity.
	When~$\mathcal N_\kappa(\lambda) \leq C_\kappa \lambda^{-1/\alpha}$, we get a bound
	\begin{equation}
	\label{eq:dof_h}
	\mathcal N_h(\lambda) \leq 1 + C_\kappa |\Omega| \|h\|_2^{2/\alpha} \lambda^{-1/\alpha}.
	\end{equation}
	instead of a bound~$\mathcal N(\lambda) \leq 1 + C |\Omega| \lambda^{-1/\alpha}$ for the case of no pooling, that is, the improvement is again controlled by~$\|h\|_2^2$, which goes from~$1/|\Omega|$ for global pooling, to~$1$ for no pooling.
\end{itemize}

The following result provides an example of a generalization bound for invariant target functions, which illustrates that there is no curse of dimensionality in the rate if the patch dimension is much smaller than the full dimension (\ie, $d \ll d|\Omega|$), as well as the benefits of pooling.

\begin{theorem}[Fast rates for one-layer CKN on invariant targets]
\label{thm:fast_rates}
Consider an invariant target of the form~$f^*(x) = \sum_{u \in \Omega} g^*(x_u)$, with~$\E_{z\sim d\tau}[g^*(z)] = 0$, and assume:
\begin{itemize}
	\item (capacity condition) $\mathcal N_\kappa(\lambda) \leq C_\kappa \lambda^{-1/\alpha}$,
	\item (source condition) $g^* = T_\kappa^r g_0$ and~$\|g_0\|_{L^2(d \tau)} \leq C_*$, where~$T_\kappa$ is the integral operator of the kernel~$\kappa$ on~$L^2(d \tau)$, with~$r > \frac{\alpha - 1}{2\alpha}$.
\end{itemize}
Then, kernel ridge regression with the one-layer CKN kernel~$K_h$ with pooling filter~$h$ (with~$\|h\|_1 = 1$) satisfies, for~$n$ large enough,
\begin{equation}
\E[R(\hat f_n)] - R(f^*) \leq C |\Omega| \left(\frac{\|h\|_2^{2/\alpha}}{n} \right)^{\frac{2\alpha r}{2 \alpha r + 1}},
\end{equation}
where~$C$ is independent of~$|\Omega|$ and~$h$.
For global pooling, the factor~$\|h\|_2^{2/\alpha} = |\Omega|^{- 1/\alpha}$ can be improved to~$|\Omega|^{-1}$. In contrast, with no pooling we have~$\|h\|_2^{2/\alpha} = 1$, \ie,~$n$ needs to be~$|\Omega|$ times larger for the same guarantee. Note that for~$\alpha \to 1$ and~$r=1/2$, the resulting bound resembles that of Proposition~\ref{prop:1layer_generalization}.

We note that if~$g^*$ is assumed to be~$s$-smooth on the sphere, the source condition with~$2 \alpha r = \frac{2s}{d-1}$ corresponds to a Sobolev condition of order~$s$, and leads to the bound
\begin{equation}
\E[R(\hat f_n)] - R(f^*) \leq C |\Omega| \left(\frac{ \|h\|_2^{2/\alpha}}{n} \right)^{\frac{2s}{2s + d-1}},
\end{equation}
which highlights that the rate only depends on the patch dimension~$d$ instead of the full dimension~$d|\Omega|$.

\end{theorem}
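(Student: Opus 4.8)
The plan is to run the classical bias--variance analysis of kernel ridge regression against the explicit Mercer decomposition of $K_h$ derived above, exploiting that invariance of $f^*$ forces it into a very small, well-understood part of the spectrum. First I would locate $f^*$: expanding $g^*$ in spherical harmonics gives $g^* = \sum_{k\ge 1}\sum_j a_{k,j}Y_{k,j}$ (no $k=0$ term, since $\E_{d\tau}[g^*]=0$), and since $\phi_{0,k,j}(x) = |\Omega|^{-1/2}\sum_u Y_{k,j}(x_u)$ we obtain
\begin{equation*}
f^*(x) = \sum_{u\in\Omega} g^*(x_u) = \sqrt{|\Omega|}\sum_{k\ge 1}\sum_j a_{k,j}\,\phi_{0,k,j}(x),
\end{equation*}
so the invariant target only excites the zero-spatial-frequency eigenfunctions. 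Since the covariance operator $\Sigma$ of $K_h$ has eigenvalue $\lambda_0\mu_k = \mu_k$ on $\phi_{0,k,j}$ (recall $\lambda_0 = 1$), the source condition $g^* = T_\kappa^r g_0$ with $\|g_0\|_{L^2(d\tau)}\le C_*$ transfers to $f^* = \Sigma^r h_0$ with $h_0 = \sqrt{|\Omega|}\sum_{k,j}(g_0)_{k,j}\phi_{0,k,j}$ and $\|h_0\|_{L^2}\le \sqrt{|\Omega|}\,C_*$, which is consistent with Proposition~\ref{prop:1layer_generalization} (the case $r=1/2$, where $\|f^*\|_{\Hcal_{K_h}}=\sqrt{|\Omega|}\|g^*\|_\Hcal$).

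For the variance I would use the already-established bound on the degrees of freedom $\mathcal N_h(\lambda) = \Tr((\Sigma+\lambda I)^{-1}\Sigma) \le 1 + C_\kappa|\Omega|\,\|h\|_2^{2/\alpha}\lambda^{-1/\alpha}$ from~\eqref{eq:dof_h}, together with the boundedness facts needed to invoke the KRR bound ($K_h(x,x)$ bounded after the appropriate normalization, $\|f^*\|_\infty\le|\Omega|\,\|g^*\|_\infty<\infty$). Then I would apply the standard refined capacity-and-source analysis of KRR (the regularized version of Proposition~\ref{prop:krr_bound}, as in~\citealp{bach2021ltfp,caponnetto2007optimal}), which for $n$ large enough yields, in the non-saturated regime $r\le 1$,
\begin{equation*}
\E[R(\hat f_\lambda)] - R(f^*) \lesssim \frac{\tau_\rho^2\,\mathcal N_h(\lambda)}{n} + \lambda^{2\min(r,1)}\|h_0\|_{L^2}^2 + (\text{lower order}).
\end{equation*}
Plugging in the bounds above and balancing the $\lambda^{-1/\alpha}$ variance term against the $\lambda^{2r}$ bias term (the $O(1/n)$ contribution of the constant eigenfunction $\phi_0$ is of lower order because $2\alpha r/(2\alpha r+1)<1$) gives $\lambda\asymp(\|h\|_2^{2/\alpha}/n)^{\alpha/(2\alpha r+1)}$ and the claimed rate $C|\Omega|(\|h\|_2^{2/\alpha}/n)^{2\alpha r/(2\alpha r+1)}$ with $C$ free of $|\Omega|$ and $h$.

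For global pooling I would instead use $\lambda_w=0$ for $w\ne 0$, so~\eqref{eq:dof_global} gives $\mathcal N_h(\lambda)\le 1 + \mathcal N_\kappa(\lambda)\le 1 + C_\kappa\lambda^{-1/\alpha}$ with \emph{no} factor $|\Omega|$; re-optimizing then replaces $\|h\|_2^{2/\alpha}=|\Omega|^{-1/\alpha}$ by $|\Omega|^{-1}$, while for no pooling $\|h\|_2^2=1$. The Sobolev corollary is immediate by taking $2\alpha r = 2s/(d-1)$, the standard correspondence between the decay of dot-product kernels on $\Sbb^{d-1}$, the source exponent, and having $s$ bounded derivatives (Appendix~\ref{sub:dp_kernels_tp}). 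The main obstacle is the bookkeeping connecting the three ingredients: transferring the source condition from $g^*$ on the sphere to $f^*$ with respect to $\Sigma$ while observing that the variance term still collects \emph{all} spatial frequencies $\lambda_w\mu_k$ (not just $w=0$), carrying the $|\Omega|$ factors consistently through the normalization of $K_h$, and invoking a version of the KRR excess-risk bound valid also in the possibly-misspecified regime $r<1/2$ — this is the role of the hypothesis $r>\frac{\alpha-1}{2\alpha}$, equivalently $r+\frac{1}{2\alpha}>\frac12$, which is precisely the threshold under which the refined rate holds. The clean structural point that makes everything go through is that an invariant target lives entirely in the zero-frequency eigenspaces of $\Sigma$, so its approximation cost is governed purely by $\mu_k$ while its estimation cost is inflated only through the trace of the full operator.
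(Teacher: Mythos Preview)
Your proposal is correct and follows essentially the same route as the paper: invoke the refined KRR bound of \citet[Prop.~7.2]{bach2021ltfp}, plug in the degrees-of-freedom estimate~\eqref{eq:dof_h} (or~\eqref{eq:dof_global} for global pooling), and control the bias via the source condition on~$g^*$ after observing that the invariant target sits entirely in the $w=0$ eigenspaces. The only cosmetic difference is that the paper phrases the bias step as computing the approximation functional $A_h(\lambda,f^*)=\min_f\|f-f^*\|_{L^2}^2+\lambda\|f\|_{\Hcal_{K_h}}^2$ directly and reducing it to $|\Omega|\,A_\kappa(\lambda,g^*)$ before applying the Cucker--Smale bound, whereas you first transfer the source condition $g^*=T_\kappa^r g_0$ to $f^*=\Sigma^r h_0$ with $\|h_0\|_{L^2}\le\sqrt{|\Omega|}\,C_*$; these are the same computation, and both yield the bias term $|\Omega|\,C_*^2\lambda^{2r}$.
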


\begin{proof}
Under the conditions of the theorem, we may apply~\citep[Proposition 7.2]{bach2021ltfp}, which states that for~$\lambda \leq 1$ and~$n \geq \frac{5}{\lambda}(1 + \log(1/\lambda))$, we have
\begin{equation}
\label{eq:full_krr_bound}
\E[R(\hat f_\lambda)] - R(f^*) \leq 16 \frac{\tau_\rho^2}{n} \Ncal_h(\lambda) + 16 A_h(\lambda, f^*) + \frac{24}{n^2} \|f^*\|_\infty^2,
\end{equation}
where the degrees of freedom~$\Ncal_h(\lambda)$ is given in~\eqref{eq:dof} and satisfies the upper bound~\eqref{eq:dof_h}, and the approximation error is given by
\begin{equation*}
A_h(\lambda, f^*) = \min_{f \in \Hcal_{K}} \|f - f^*\|^2_{L^2(d \tau^{\otimes |\Omega|})} + \lambda \|f\|^2_{\Hcal_{K_h}},
\end{equation*}
where~$\Hcal_{K_h}$ is the RKHS of~$K_h$. Denote by
\begin{align*}
f = a_0 \phi_0 + \sum_{w, k, j} a_{w,k,j} \phi_{w,k,j}, \qquad f^* = a_0^* \phi_0 + \sum_{w,k,j} a_{w,k,j}^* \phi_{w,k,j}
\end{align*}
the decompositions of~$f$ and~$f^*$ in the orthonormal basis defined above.
If~$g^* = \sum_{k,j} g_{k,j} Y_{k,j}$ is the spherical harmonic decomposition of~$g^*$, then we have
\begin{align*}
a_0^* &= \E[f^*(x)] = 0 \\
a_{0,k,j}^* &= \E[f^*(x) \phi_{0,k,j}^*(x)] = g_{k,j}\sum_u e_0^*[u] = \sqrt{|\Omega|} g_{k,j} \\
a_{w,k,j}^* &= 0 \quad \text{for }w \ne 0.
\end{align*}
This yields
\begin{align*}
A_h(\lambda, f^*) &= \min_{a_0, a_{0,k,j}} (a_{0} - a_{0}^*)^2 + \lambda \frac{a_{0}^2}{|\Omega| \lambda_0 \mu_0} + \sum_{k \geq 1} \sum_{j=1}^{N(d,k)} (a_{0,k,j} - a_{0,k,j}^*)^2 + \lambda \frac{a_{0,k,j}^2}{\lambda_0 \mu_k} \\
	&= \min_{b_{k,j}} \sum_{k \geq 1} \sum_{j=1}^{N(d,k)} |\Omega|( b_{k,j} - g_{k,j})^2 + \lambda |\Omega|\frac{b_{k,j}^2}{\lambda_0 \mu_k} \\
	&= |\Omega| \min_{b_{k,j}} \sum_{k \geq 0} \sum_{j=1}^{N(d,k)} ( b_{k,j} - g_{k,j})^2 + \lambda \frac{b_{k,j}^2}{\mu_k} \\
	&= |\Omega| \min_{g \in \Hcal} \|g - g^*\|^2_{L^2(d\tau)} + \lambda \|g\|^2_{\Hcal} = |\Omega| A_\kappa(\lambda, g^*)
\end{align*}
where the second line uses~$a_0^* = 0$ and considers~$b_{k,j} = a_{0,k,j}/\sqrt{|\Omega|}$, while the third line uses~$g_{0,1} = 0$ and the fact that~$\lambda_0 = 1$ regardless of the choice of pooling filter~$h$.
Thus, $A_h(\lambda, f^*)$ does not depend on~$h$, and corresponds to the approximation error~$A_\kappa(\lambda, g^*)$ of the patch kernel~$\kappa$ on the sphere (up to a factor~$|\Omega|$).
Under the source condition, we then have, by~\citep[Theorem 3, p.33]{cucker2002mathematical},
\begin{align*}
A_{h}(\lambda, f^*) = |\Omega| A_\kappa(\lambda, g^*) \leq |\Omega| C_*^2 \lambda^{2r}.
\end{align*}
with~$a^*_{0,k,j} = a^*_0$. Combining with~\eqref{eq:dof_h} and plugging this into~\eqref{eq:full_krr_bound}, we obtain
\begin{equation*}
\E[R(\hat f_\lambda)] - R(f^*) \lesssim \frac{\tau_\rho^2}{n} + \frac{\tau_\rho^2 C_\kappa |\Omega| \|h\|_2^{2/\alpha} \lambda^{-1/\alpha}}{n} + |\Omega|C_*^2 \lambda^{2r} + \frac{\|f^*\|_\infty^2}{n^2}.
\end{equation*}
For the choice
\begin{equation*}
\lambda_n = \left(\frac{\tau_\rho^2 C_\kappa |\Omega| \|h\|_2^{2/\alpha}}{r \alpha |\Omega| C_*^2 n} \right)^{\frac{\alpha}{2 \alpha r + 1}},
\end{equation*}
we have
\begin{align*}
\E[R(\hat f_\lambda)] - R(f^*) &\lesssim (|\Omega| C_*^2)^{\frac{1}{2 \alpha r + 1}} \left( \frac{C_\kappa \tau_\rho^2|\Omega| \|h\|_2^{2/\alpha}}{n} \right)^{\frac{2 \alpha r}{2 \alpha r + 1}} + \frac{\tau_\rho^2}{n} + \frac{\|f^*\|^2}{n^2} \\
	&= (C_\kappa \tau_\rho^2)^{\frac{2\alpha r}{2 \alpha r + 1}} C_*^{\frac{2}{2 \alpha r + 1}} |\Omega| \left(\frac{ \|h\|_2^{2/\alpha}}{n} \right)^{\frac{2 \alpha r}{2 \alpha r + 1}} + \frac{\tau_\rho^2}{n} + \frac{\|f^*\|^2}{n^2}.
\end{align*}
In the case of global pooling, the term in parentheses scales as $(|\Omega|^{-1/\alpha} / n)^{\frac{2\alpha r}{2 \alpha r + 1}}$, but can be improved to~$(1/|\Omega|n)^{\frac{2 \alpha r}{2 \alpha r + 1}}$ by using the bound~\eqref{eq:dof_global} on~$\mathcal N_h(\lambda)$ instead of~\eqref{eq:dof_h}.

When~$r > \frac{\alpha - 1}{2\alpha}$, we can choose~$n$ large enough so that~$n \geq \frac{5}{\lambda_n}(1 + \log(1/\lambda_n))$ is satisfied, and the higher order terms in~$n$ are negligible.
\end{proof}

\end{document}